\newif\ifpreprint   \preprinttrue
\newif\ifbiblatex   \biblatextrue
  \providecommand{\affil}[2][]{}     
\providecommand{\keywords}[1]{\small\textbf{\textit{Keywords---}} #1}
\title{Distributional Shrinkage I: Universal Denoiser \\Beyond Tweedie's Formula\thanks{Previously circulated as ``Distributional Shrinkage I: Universal Denoisers in Multi-Dimensions'' (\href{https://arxiv.org/abs/2511.09500v1}{arXiv:2511.09500v1}).}}
\author{Tengyuan Liang}
\affil{The University of Chicago}
\date{\today}
\begin{document}
\maketitle

\begin{abstract}
We study the problem of denoising when only the noise level is known, not the noise distribution. Independent noise $Z$ corrupts a signal $X$, yielding the observation $Y = X + \sigma Z$ with known $\sigma \in (0,1)$. We propose \emph{universal} denoisers, agnostic to both signal and noise distributions, that recover the signal distribution $P_X$ from $P_Y$. When the focus is on distributional recovery of $P_X$ rather than on individual realizations of $X$, our denoisers achieve order-of-magnitude improvements over the Bayes-optimal denoiser derived from Tweedie's formula, which achieves $O(\sigma^2)$ accuracy. They shrink $P_Y$ toward $P_X$ with $O(\sigma^4)$ and $O(\sigma^6)$ accuracy in matching generalized moments and densities. Drawing on optimal transport theory, our denoisers approximate the Monge--Amp\`ere equation with higher-order accuracy and can be implemented efficiently via score matching.

Let $q$ denote the density of $P_Y$. For distributional denoising, we propose replacing the Bayes-optimal denoiser,
$$\mathbf{T}^*(y) = y + \sigma^2 \nabla \log q(y),$$
with denoisers exhibiting less-aggressive distributional shrinkage,
$$\mathbf{T}_1(y) = y + \frac{\sigma^2}{2} \nabla \log q(y),$$
$$\mathbf{T}_2(y) = y + \frac{\sigma^2}{2} \nabla \log q(y) - \frac{\sigma^4}{8} \nabla \!\left( \frac{1}{2} \| \nabla \log q(y) \|^2 + \nabla \cdot \nabla \log q(y) \right)\!.$$
\end{abstract}

\keywords{universal denoisers, generalized moment matching, distributional shrinkage, deconvolution, optimal transport, Monge-Ampère equation, Tweedie's formula.}



\section{Introduction}
\label{sec:intro}

We revisit the classic denoising problem in multi-dimensions with $d \in \mathbb{N}^+$. Let $X, Z \in \mathbb{R}^d$ be two independent random variables, where $X$ is the signal of interest drawn from an unknown distribution $P_X$, and $Z$ is the noise with distribution $P_Z$ that is symmetric around zero and also unknown. We only observe the noisy measurement $Y$, with an additive noise level $\sigma \in (0, 1)$ known a priori:
\begin{align*}
Y = X + \sigma Z \;.
\end{align*}
The goal is to recover the underlying signal distribution $P_X$ from the distribution of noisy measurements $P_Y$. We emphasize that our goal is not to estimate each individual $X$, but rather to recover its probability distribution. We aim to construct \emph{universal denoising maps} $\mathbf{T}: \mathbb{R}^d \rightarrow \mathbb{R}^d$, such that the push-forward distribution $\mathbf{T} \sharp P_Y$ closely matches $P_X$ with high accuracy, which applies across \emph{a broad range of signal distributions $P_X$ and noise distributions $P_Z$}.

We work in a setting where the exact distribution of $Z$ is unknown but meets some mild moment conditions. Specifically, we do not assume Gaussianity or independence across coordinates for $Z$. This universality requirement is crucial in practice because the noise distribution is often unknown and may not be Gaussian. Our setting departs from the traditional empirical Bayes literature \cite{robbins1964empirical,efron1973stein,efron2016empirical}, which frequently assumes Gaussian noise or exponential families, and from the deconvolution literature \cite{carroll1988optimal,fan1991optimal,efron2016empirical}, which often presumes a known noise distribution.

One of the main messages we deliver in this paper is that if the goal is not to minimize the mean squared error of estimating individual realizations of $X$, but rather to recover the entire distribution $P_X$ accurately, then one should replace the Bayes-optimal denoiser, often limited to the Gaussian noise setting,
\begin{align*}
\mathbf{T}^*(y) = y + \sigma^2 \nabla \log q(y) \;, \quad\text{where $q:\mathbb{R}^d \rightarrow \mathbb{R}$ is the density function of $P_Y$} \;,
\end{align*}
by new first- and second-order optimal denoisers
\begin{align}
\mathbf{T}_1(y) &= y + \frac{\sigma^2}{2} \nabla \log q(y) \;, \label{eqn:T1}\\
\mathbf{T}_2(y) &= y + \frac{\sigma^2}{2} \nabla \log q(y) - \frac{\sigma^4}{8} \nabla \left( \frac{1}{2} \| \nabla \log q(y) \|^2 + \nabla \cdot \nabla \log q(y) \right) \;. \label{eqn:T2}
\end{align}
These denoisers achieve $O(\sigma^4)$ and $O(\sigma^6)$ accuracy in distributional matching, respectively, representing an order-of-magnitude improvement over the classical Bayes-optimal shrinkage $\mathbf{T}^*(y)$. These denoisers are \textbf{universal denoisers}, meaning they work under a host of smooth signal distributions $P_X$ and noise distributions $P_Z$ that satisfy mild moment conditions. These statements will be made precise in the following sections.

\begin{figure}[!ht]
	\centering
	\includegraphics[width=0.88\textwidth]{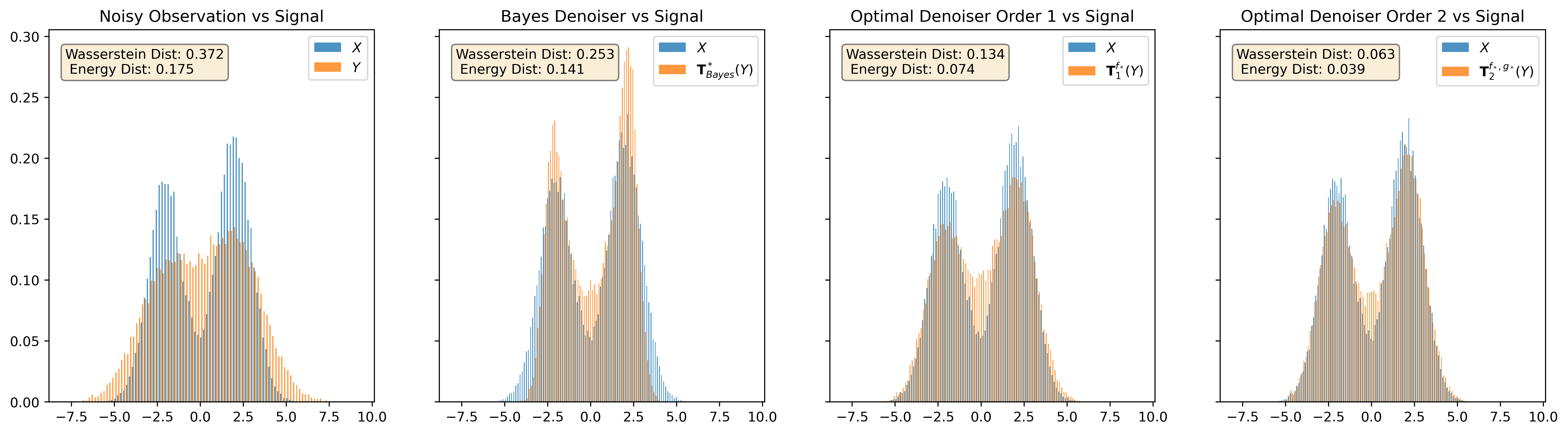}
	\includegraphics[width=0.88\textwidth]{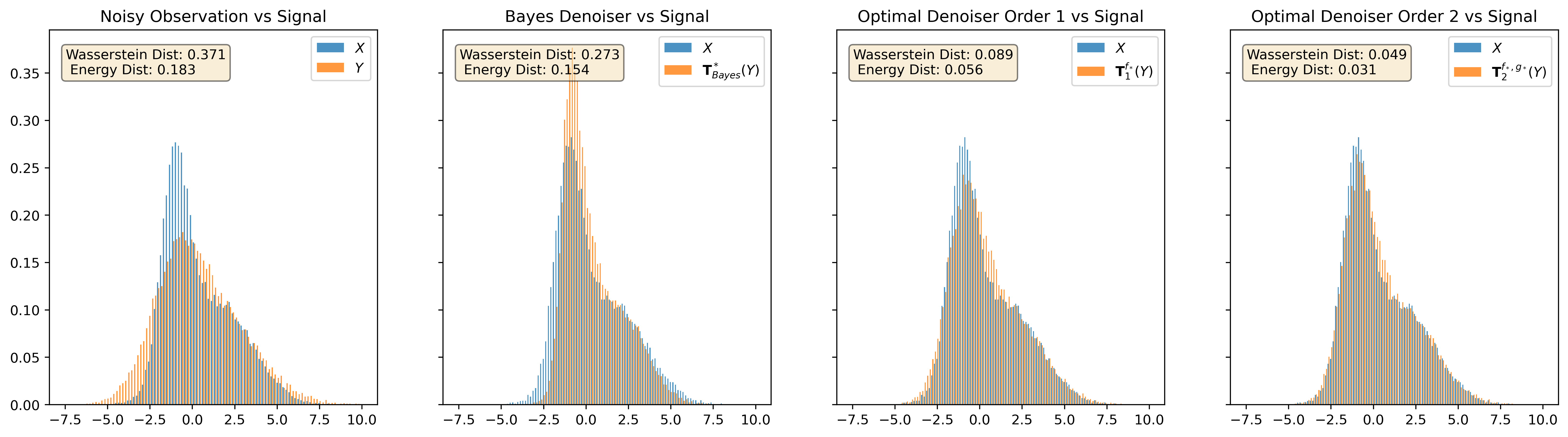}
	\caption{\scriptsize Comparison of shrinkage maps for $d=1$. We compare $\mathbf{T}^{\mathrm{id}} \sharp P_Y$ (no-shrinkage), $\mathbf{T}^* \sharp P_Y$ (Bayes-optimal shrinkage), and our proposed optimal-transport inspired shrinkage $\mathbf{T}_1^{f_*} \sharp P_Y$ and $\mathbf{T}_2^{f_*, g_*} \sharp P_Y$  against the true signal distribution $P_X$, from left to right. Each row corresponds to a different signal distribution $P_X$, chosen as a mixture of Gaussians. Here, the noise parameter $\eta = \sigma^2/2$ is chosen as 0.9. We report the Wasserstein \cite{villani2008optimal} and energy \cite{szekely2013energy} distances between the denoised and true signal distributions in the top-left corner of each subplot. As seen, the Bayes denoiser overly shrinks the distribution, whereas the no-shrinkage denoiser spreads it. The proposed denoisers $\mathbf{T}_1^{f_*}$ and $\mathbf{T}_2^{f_*, g_*}$ achieve improvement in distributional matching.}
	\label{fig:intro}
\end{figure}

\paragraph{Data-level Shrinkage}
A traditional approach to denoising is to construct a shrinkage map $\mathbf{T}: \mathbb{R}^d \rightarrow \mathbb{R}^d$ such that, at the data level, $\mathbf{T}(Y)$ matches $X$ as closely as possible. A well-known result, called Eddington-Robbins-Tweedie's formula \cite{eddington1940correction, robbins1964empirical,efron1973stein}, states that when the noise $Z$ is an isotropic Gaussian, i.e., $Z \sim \mathcal{N}(0, I_d)$, the optimal denoiser $\mathbf{T}^*$, in terms of mean squared error (MSE), $\E[ \| X - \mathbf{T}(Y) \|^2]$, is
\begin{align}
\label{eqn:empirical-bayes-shrinkage}
\mathbf{T}^*(y) := \E[X|Y = y] = y + \sigma^2 \nabla \log q(y) \;.
\end{align}
This identity lies at the heart of several well-known denoising techniques, from the traditional empirical Bayes methods to the more recent score-based diffusion models \cite{sohl2015deep,song2020score}. James-Stein shrinkage \cite{james1961estimation,brown1966admissibility} may be interpreted as a special case of Tweedie's formula, with $P_X$ being Gaussian \cite{efron1973stein} and estimated score function as $\widehat{\nabla \log q(y)} := -\tfrac{y}{\|Y\|^2/(d-2)}$.
\begin{align}
\mathbf{T}^{\mathrm{js}}(y) := \left( 1 - \frac{\sigma^2}{\|Y\|^2/(d-2)} \right) y \;.
\end{align}

This data-level shrinkage matches the first moment of the signal distribution $P_X$ on the distribution-level, yet shrinks the data too aggressively---namely $\E[\mathbf{T}^*(Y) \otimes \mathbf{T}^*(Y)] \prec  \E[X \otimes X]$---and matches the second moment only up to $\Theta(\sigma^2)$ accuracy when $\sigma$ is small. Specifically, we witness an over-shrinkage effect:
\begin{align*}
\text{Over-shrinkage}:~ \E[\mathbf{T}^*(Y) \otimes \mathbf{T}^*(Y)] -  \E[X \otimes X] =  - \sigma^2 (I_d + \sigma^2 \E[ \nabla^2 \log q(Y) ]) \preceq 0 \;.
\end{align*}
This over-shrinkage effect is known \cite{garcia2024new,jaffe2025constrained} and can be explained as follows: this data-level shrinkage focuses solely on minimizing the MSE, a (conditional) first-moment matching criterion. It does not aim to match higher-order moments of the distribution. Note that the trivial map $\mathbf{T}^{\mathrm{id}}(y) = y$ matches the first moment exactly but overshoots the second moment by $\Theta(\sigma^2)$:
\begin{align*}
\text{No-shrinkage}:~ \E[\mathbf{T}^{\mathrm{id}}(Y) \otimes \mathbf{T}^{\mathrm{id}}(Y)] -  \E[X \otimes X] =  \sigma^2 I_d \succeq 0 \;.
\end{align*}
Therefore, at the data level, the shrinkage estimator improves the MSE by only a constant factor, not by its order of magnitude in $\sigma^2$. In fact, the improvement in MSE is a constant ratio $\tfrac{ \sigma^2( d- \sigma^2 \E \| \log q(Y) \|^2 ) }{\sigma^2 d} \in (0, 1)$ with small $\sigma$. In this small-noise regime, we shall see shortly that taking a distribution-level viewpoint motivates denoisers that improve denoising performance by an order of magnitude.

Naturally, one wonders: is it possible to design denoisers such that $\mathbf{T} \sharp P_Y$ matches the second moment of $P_X$ up to $o(\sigma^2)$ accuracy? More ambitiously, is it possible to simultaneously match generalized moments with higher accuracy? Is it possible to construct universal denoising maps that work for a broad range of signal and noise distributions, extending well beyond the Gaussian noise setting? We show in this paper that the answer is affirmative.

To visually illustrate the over-shrinkage effect, we plot the distributions  $\mathbf{T}^{\mathrm{id}} \sharp P_Y$ (no-shrinkage) and $\mathbf{T}^* \sharp P_Y$ (Bayes-optimal shrinkage) against the signal distribution $P_X$ in the left two columns of Figures~\ref{fig:intro} (for $d=1$) and \ref{fig:intro-2d} (for $d=2$), where each row corresponds to a different signal distribution $P_X$. We observe that the denoised distribution $\mathbf{T}^* \sharp P_Y$ is overly concentrated compared to the true distribution $P_X$, while the no-shrinkage distribution $\mathbf{T}^{\mathrm{id}} \sharp P_Y$ is overly spread out. In contrast, the right two columns of Figures~\ref{fig:intro} and \ref{fig:intro-2d} plot the distribution shrinkage denoisers $\mathbf{T}_1$ in \eqref{eqn:T1} and $\mathbf{T}_2$ in \eqref{eqn:T2} proposed in this paper, which match the true distribution $P_X$ with markedly higher accuracy.

\paragraph{Distribution-level Shrinkage}
A different perspective is to view the denoiser $\mathbf{T}$ as a transport map that pushes forward the distribution of $Y$ to match that of $X$, i.e., $\mathbf{T} \sharp P_Y \approx P_X$. Consider the following less aggressive denoiser, which is motivated by optimal transport theory \cite{ambrosio2005gradient,villani2008optimal,liang2024denoising}:
\begin{align*}
 \mathbf{T}_1: y \mapsto y + \frac{\sigma^2}{2} \nabla \log q(y) \;,
\end{align*}
Compared with \eqref{eqn:empirical-bayes-shrinkage}, this shrinkage is less aggressive; it is the middle-point $$\mathbf{T}_1 = (\mathbf{T}^* + \mathbf{T}^{\mathrm{id}})/2 \;.$$ In the Gaussian setting, one can easily verify that $\mathbf{T}_1$ matches the second moment of $P_X$ up to $O(\sigma^4)$ accuracy, an order of magnitude improvement compared to Bayes-optimal shrinkage $\mathbf{T}^*(y)$ given by the Tweedie's formula,
\begin{align*}
\text{Optimal Shrinkage}:~ \big| \E[\mathbf{T}_1(Y) \otimes \mathbf{T}_1(Y)] -  \E[X \otimes X] \big| = O(\sigma^4) \;.
\end{align*}

Surprisingly, as we shall prove later, under mild conditions on the signal and noise distributions, this optimal transport-inspired shrinkage $\mathbf{T}_1$ achieves $O(\sigma^4)$ for a wide range of bounded smooth test functions $m: \mathbb{R}^d \rightarrow \mathbb{R}$,
\begin{align*}
\big| \E[ m(\mathbf{T}_1(Y)) ] - \E[ m(X) ] \big| = O(\sigma^4) \;.
\end{align*}
The third column in Figures~\ref{fig:intro} and \ref{fig:intro-2d} illustrates the denoised distributions $\mathbf{T}_1 \sharp P_Y$ against the signal distribution $P_X$, showing significant improvement over both the Bayes-optimal denoiser $\mathbf{T}^*$ and the no-shrinkage denoiser $\mathbf{T}^{\mathrm{id}}$, both visually and quantitatively.

\begin{figure}
	\centering
	\includegraphics[width=0.88\textwidth]{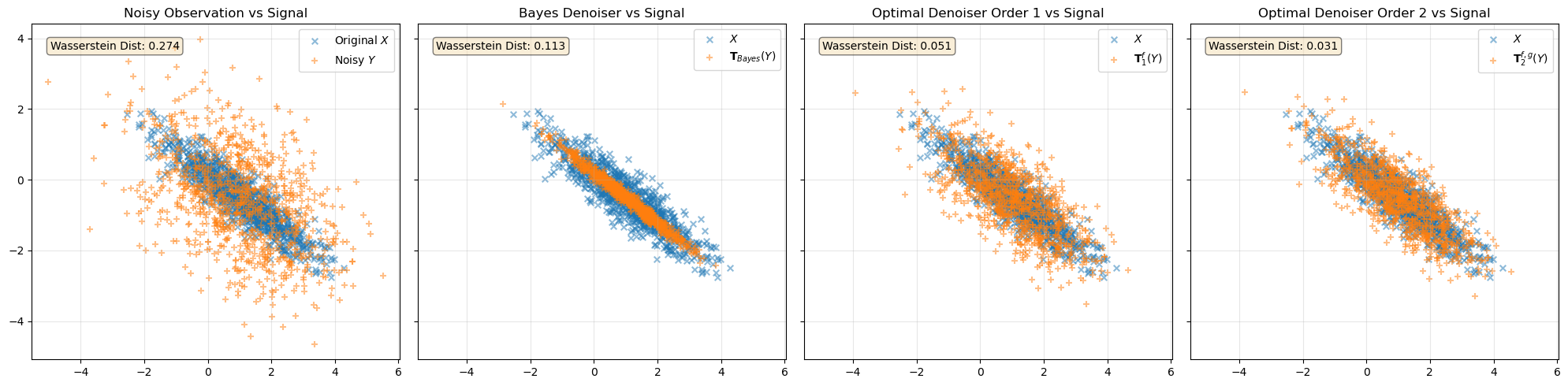}
	\caption{\scriptsize Comparison of shrinkage maps for $d=2$. We compare four denoisers as in Figure~\ref{fig:intro}, but with scatterplots. Here, the signal $X$ is Gaussian, and the noise parameter $\eta := \sigma^2/2$ is chosen as 0.5. We report the Wasserstein distance between the denoised and true signal distributions in the top-left corner of each subplot. Again, the Bayes denoiser shrinks the distribution too aggressively. The proposed denoisers $\mathbf{T}_1^{f_*}$ and $\mathbf{T}_2^{f_*, g_*}$ achieve improvement in distributional matching.}
	\label{fig:intro-2d}
\end{figure}

The discussions above motivate us to study distribution-level shrinkage: can we design denoisers that approximate the entire distribution of $P_X$ with even higher accuracy? In particular, for a host of smooth test functions $m: \mathbb{R}^d \rightarrow \mathbb{R}$, can we design denoisers $\mathbf{T}$ such that
\begin{align*}
\big| \E[ m(\mathbf{T}(Y)) ] - \E[ m(X) ] \big| = o(\sigma^4) \;?
\end{align*}

Similarly, can we say how the densities between the denoised distribution $\mathbf{T} \sharp P_Y$ and the signal distribution $P_X$ match? Let $p, q$ be the density functions of $P_X, P_Y$ respectively. It is known that the optimal transport map $\mathbf{T}^{\mathrm{ot}}$ \cite{brenier1991polar} that transports $P_Y$ to $P_X$ with minimum Wasserstein distance satisfies the Monge-Ampère equation \cite{caffarelli1992regularity,villani2008optimal,deb2025no} $p(\mathbf{T}^{\mathrm{ot}}(y)) \det( \nabla \mathbf{T}^{\mathrm{ot}}(y) ) - q(y) =0$.  Can we design $\mathbf{T}$ solely based on the knowledge of $P_Y$ and noise level $\sigma$, that approximately solves the Monge-Ampère equation up to higher-order accuracy, i.e.,
\begin{align*}
	| p(\mathbf{T}(y)) \det( \nabla \mathbf{T}(y) ) - q(y) | = o(\sigma^4)?
\end{align*}

In this paper, we examine the distribution-level shrinkage problem through the lens of optimal transport maps. We develop optimal denoisers that achieve errors of order $O(\sigma^4)$ and $O(\sigma^6)$, respectively, applicable universally to a broad class of signal distributions $P_X$ and noise distributions $P_Z$. The fourth column in Figures~\ref{fig:intro} and \ref{fig:intro-2d} showcases the effectiveness of the higher-order denoiser $\mathbf{T}_2$ in \eqref{eqn:T2}, which further improves the accuracy of distributional matching compared to $\mathbf{T}_1$.

Before diving into our theory and denoising equations, we conclude this section with a simple numerical simulation. We show in Figure~\ref{fig:order-of-magnitude-improvement} the order-of-magnitude improvement in denoising performance achieved by our proposed optimal first- and second-order denoisers $\mathbf{T}_1^{f_*}, \mathbf{T}_2^{f_*,g_*}$---defined in \eqref{eq:denoiser-first-order} and \eqref{eq:denoiser-second-order}---compared to the Bayes-optimal denoiser $\mathbf{T}^*$ and the no-shrinkage denoiser $\mathbf{T}^{\mathrm{id}}$. We report the second moment difference, the Wasserstein distance \cite{villani2008optimal}, and the energy distance \cite{szekely2013energy} between the denoised distribution and the true signal distribution. As shown, our proposed denoisers achieve lower errors across all three metrics, improving upon the Bayes-optimal denoiser by an order of magnitude, confirming our theoretical findings.

\begin{figure}
	\centering
	\includegraphics[width=0.32\textwidth]{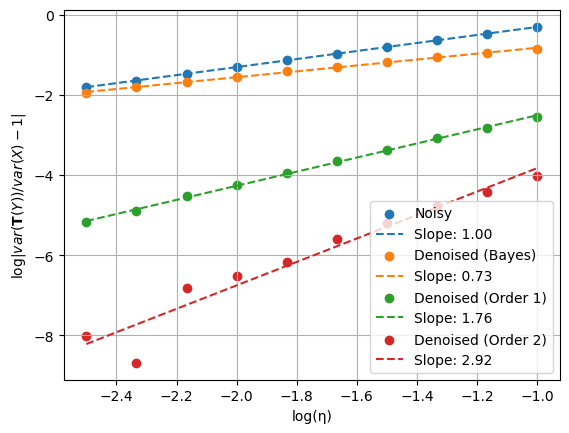}
	\includegraphics[width=0.32\textwidth]{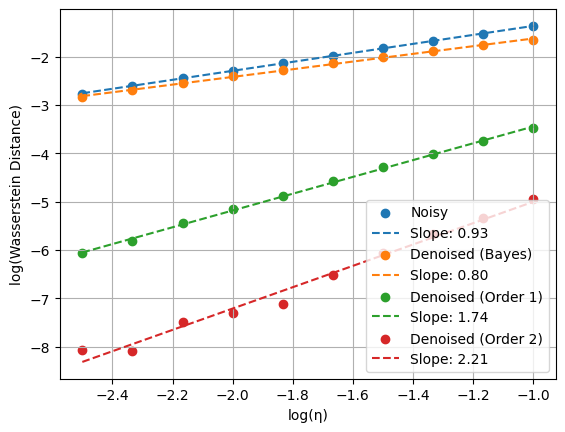}
	\includegraphics[width=0.32\textwidth]{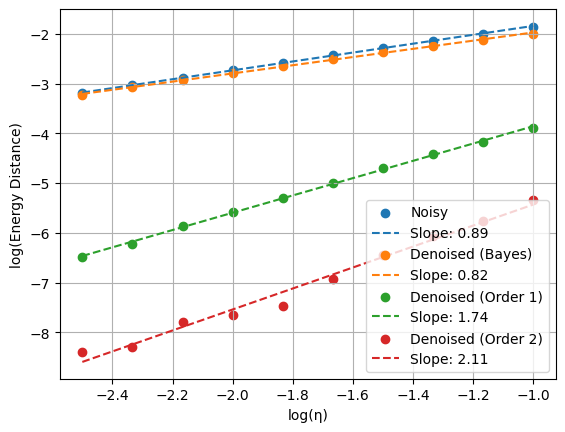}
	\caption{\scriptsize Comparing the order-of-magnitude in the denoising errors, varying the noise parameter $\eta = \sigma^2/2$. We compare four denoisers, the same as in Figure~\ref{fig:intro}: blue for $\mathbf{T}^{\mathrm{id}}$ (no-shrinkage), orange for $\mathbf{T}^*$ (Bayes-optimal shrinkage), green for $\mathbf{T}_1$ (our first-order denoiser), and red for $\mathbf{T}_2$ (our second-order denoiser). We report the variance error $\| \E[\mathbf{T}(Y) \otimes \mathbf{T}(Y)]- \E[X \otimes X]   \|/ \| \E[X \otimes X] \|$, the Wasserstein distance, and the energy distance between the denoised distribution and the true signal distribution. To numerically evaluate the errors between distributions, we sample $n=10^7$ and use the empirical distribution to approximate the population distribution. For each $\eta$ value, we compare the errors on the same n-samples $X \sim \cN(0, 1), Y\sim \cN(0, 1+2\eta)$ across four denoisers.}
	\label{fig:order-of-magnitude-improvement}
\end{figure}

\paragraph{On the Noise Level}
We remark that $\sigma \in (0, 1)$ can be assumed without loss of generality. If $\sigma \geq 1$, one can rescale $Y$ by $\tau : = \| \E[Y Y^\top] \|_{\mathrm{op}}^{1/2} > \sigma$ to ensure that the effective noise level $\widetilde{\sigma} := \sigma/\tau < 1$, as long as $X$ is not a point mass at $0$; rescaling $Y$ by $1.1\sigma$ also suffices as $\widetilde{\sigma}= 1/1.1$. Therefore, throughout the paper, we assume $\sigma \in (0, 1)$ and thus $\eta = \sigma^2/2 \in (0, 1/2)$ without loss of generality.

\paragraph{Notations}
Throughout the paper, we use bold uppercase letters $\mathbf{T}$ to denote maps from $\mathbb{R}^d$ to $\mathbb{R}^d$. We use $x, y, z$ to denote vectors in $\mathbb{R}^d$, and $X, Y, Z$ to denote random variables in $\mathbb{R}^d$. We use $P_X,  P_Y,  P_Z$ to denote probability distributions of random variables, and their density functions w.r.t. the Lebesgue measure are denoted as $p, q, \phi :\mathbb{R}^d \rightarrow \mathbb{R}$. When there is no ambiguity, we write $p = P_X$ and $q = P_Y$ for simplicity. For a probability distribution $P$, we use $\E_{X \sim P}[\cdot]$ to denote the expectation w.r.t. $X \sim P$. For a map $\mathbf{T}: \mathbb{R}^d \rightarrow \mathbb{R}^d$ and a distribution $P$, we use $\mathbf{T} \sharp P$ to denote the push-forward distribution of $P$ through the map $\mathbf{T}$.

We use standard notations for gradients and Hessians: for a scalar function $f: \mathbb{R}^d \rightarrow \mathbb{R}$, $\nabla f$ is the gradient vector in $\mathbb{R}^d$, $\nabla^2 f$ is the Hessian matrix in $\mathbb{R}^{d \times d}$, and $\Delta f = \mathrm{trace}(\nabla^2 f)$ is the Laplacian operator. For a vector-valued function $\mathbf{T}: \mathbb{R}^d \rightarrow \mathbb{R}^d$, $\nabla \mathbf{T}$ is the Jacobian matrix in $\mathbb{R}^{d \times d}$, and $\det(\nabla \mathbf{T})$ is its determinant. 
For two symmetric matrices $A, B \in \mathbb{R}^{d \times d}$, we write $A \preceq B$ if $B - A$ is positive semi-definite. $I_d \in \mathbb{R}^{d\times d}$ is reserved for the identity matrix. 
For a vector $x$ and a matrix $A$, we use $\| x\|_\infty := \max_i |x_i|$ and $\| A \|_{\infty} = \max_{i,j} |A_{ij}|$, and we use $\| x\|$ and $\| A\|$ for its Euclidean norm and Frobenius norm, respectively. $\langle x, y\rangle$ and $\langle A, B \rangle$ denote the Euclidean inner product between two vectors and the Frobenius inner product between two matrices, respectively.
We use standard big-O notation: for two functions $f, g: [0, 1] \rightarrow \mathbb{R}$, we write $f(\eta) = O(g(\eta))$ if there exists a constant $C > 0$ such that $|f(\eta) /  g(\eta)| \leq C$ for all sufficiently small $\eta > 0$.

\paragraph{Organization}
The paper is organized as follows. In Section~\ref{sec:related-work}, we discuss the relations to the existing literature. In Section~\ref{sec:denoising-ode}, we introduce the differential equations characterizing the optimal denoisers along with practical implementations, followed by assumptions on signal and noise distributions in Section~\ref{sec:assumptions}. In Section~\ref{sec:moment-matching}, we study the theoretical performance of the proposed denoisers in terms of generalized moment matching. In Section~\ref{sec:monge-ampere-optimality}, we establish the optimality of the proposed denoisers in terms of approximating the static Monge-Ampère equation with higher-order accuracy. In Section~\ref{sec:numerics}, we empirically evaluate the optimal first- and second-order denoisers and compare them with the Bayes-optimal denoisers. The proofs of the main theorems are presented in Section~\ref{sec:proofs-of-theorems}, while technical lemmas and their proofs are included in Section~\ref{sec:proofs-of-supporting-lemmas}.

\section{Relations to the Literature}
\label{sec:related-work}

The empirical Bayes framework, pioneered by Robbins~\cite{robbins1964empirical}, studies the problem of estimating an unknown signal from noisy measurements. Under Gaussian noise, Eddington-Robbins-Tweedie's formula \cite{eddington1940correction,robbins1964empirical,ignatiadis2025empirical} provides an explicit formula for the Bayes-optimal denoiser. When the signal is also Gaussian, \cite{efron1973stein} showed that the empirical Bayes shrinkage provides a natural alternative to the James-Stein shrinkage \cite{james1961estimation,brown1966admissibility}, which improves upon the maximum likelihood estimator in terms of mean squared error by a constant factor. \cite{efron1973stein} further discusses the possibility of relaxing both the Gaussian noise and Gaussian signal assumptions, but still considering the linear empirical Bayes rule to induce shrinkage and denoising. Since then, extensive research has been done. On the one hand, several papers consider general signal distributions, but with Gaussian noise and Tweedie's formula, for example, see \cite{jiang2009general} for nonparametric distribution and \cite{stephens2017false} for Gaussian mixture distributions. On the other end, moving away from the Gaussian noise assumption typically requires explicit knowledge of the noise distribution, as in deconvolution \cite{carroll1988optimal,fan1991optimal,efron2016empirical}, or implicit knowledge like replicates of the noise on the same data point \cite{ignatiadis2025empirical,saremi2019neural}. However, deviating from Gaussian noise typically requires replacing the denoising map from Tweedie's formula with more complicated forms, losing a universal shrinkage formula. Our work fills in this gap. We show that, if the goal is to match the entire distribution of the signal rather than minimizing mean squared error, it is possible to design universal denoisers that work across a wide range of signal and noise distributions. These denoising maps differ from the classical Tweedie's formula, and are agnostic to the exact noise distribution, requiring only mild moment conditions. 

We work in a regime with a constant noise level, where $\sigma$ is a known constant, but the exact noise distribution is unknown. In this case, exact deconvolution is intractable due to the lack of knowledge of the noise distribution. We design simple, universal denoising maps that depend only on the score function of the noisy measurement distribution $q = P_Y$. Our denoisers achieve $O(\sigma^4)$ and $O(\sigma^6)$ accuracy in distributional matching for a range of signal distribution $P_X$ and noise distribution $P_Z$, respectively, improving over the classical Bayes-optimal shrinkage.

Optimal transport theory \cite{villani2008optimal, ambrosio2005gradient} offers a new framework for comparing and transforming probability distributions. Taking an optimal transport perspective also opens new avenues for revisiting the denoising problem. Due to the variational formulation of the Fokker--Planck equation \cite{jordan1998variational}, adding small Gaussian noise to a distribution can be interpreted as gradient descent (with a small step size) in the space of probability measures with the entropy functional as the objective and Wasserstein distance as the geometry. Therefore, denoising can be viewed as the reverse process of this gradient descent; see Proposition 1 in \cite{liang2024denoising}. As a result, we can study denoising from the perspective of transport maps. Recent works have explored the use of optimal transport maps for denoising \cite{kim2023consistency,chen2023probability,liang2024denoising} and generative modeling \cite{hur2024reversible, liang2024denoising, deb2025no}. Closest to ours are \cite{garcia2024new,jaffe2025constrained,liang2025distributionalshrinkageiioptimal}. \cite{garcia2024new,jaffe2025constrained} also notice the over-shrinkage effect of the empirical Bayes denoiser. They suggest post-processing empirical Bayes methods to correct for distributional matching, assuming knowledge of $P_X$ or its moments. In contrast, our denoisers are universal and do not require any knowledge of $P_X$. \cite{liang2025distributionalshrinkageiioptimal} studies distributional shrinkage under Gaussian noise in a one-dimensional setting but derives a hierarchy of denoisers that leverage higher-order score functions. In contrast, our work studies the problem in a multidimensional setting and constructs first- and second-order denoisers that remain universal to a broad range of noise distributions beyond Gaussian.

Our approach leverages ideas from optimal transport to design denoisers that achieve distributional matching with higher accuracy. The accuracy is measured either by integral probability metrics \cite{muller1997integral, gretton2012kernel, liang2019estimating, liang2021well} induced by smooth test functions, and by approximations to the static Monge-Ampère equation \cite{caffarelli1992regularity,deb2025no}. The set of smooth test functions can be interpreted as generalized moments of the distribution \cite{hansen1982large}. The static Monge-Ampère equation characterizes the optimal transport map between two distributions \cite{caffarelli1992regularity,villani2008optimal}. Our work builds on these ideas to derive differential equations characterizing the optimal denoisers that achieve higher-order distributional matching. 

Score matching techniques \cite{hyvarinen2005estimation} aim to learn the score function $\nabla \log q(y)$ from data. These methods have been successfully applied in generative modeling~\cite{vincent2011connection, sohl2015deep, song2020score} and empirical Bayes \cite{ghosh2025stein}. \cite{saremi2019neural} connects score matching to denoising by learning data-driven empirical Bayes denoisers based on neural networks. There is a growing literature on both methodological and theoretical sides of score matching \cite{hyvarinen2007some, saremi2018deep, sriperumbudur2017density, block2020generative,koehler2022statistical, ghosh2025stein, feng2024optimal}, among others.  Our work complements these results by providing explicit optimal denoising maps that achieve higher-order distributional matching for a wide range of signal and noise distributions. Furthermore, our denoisers can be estimated from data using score matching techniques \cite{hyvarinen2005estimation}, as they are constructed based on the score function $\nabla \log q(y)$ and its derivatives. This connection enables the practical implementation of our denoisers in real-world applications \cite{hyvarinen2005estimation,saremi2019neural,song2020score} using automatic differentiation in modern machine learning frameworks. Curiously, our second-order optimal denoiser relies on the exact form as the functional objective minimized in score matching objective \cite{hyvarinen2005estimation}, or the celebrated Stein's unbiased risk estimate objective \cite{stein1981estimation,xie2012sure,ghosh2025stein}. 

Our distributional denoising theory in the small, constant noise regime $\eta = \sigma^2/2$ provides theoretical foundations for new denoisers that achieve higher-order accuracy (as a function of the discretized step size or noise parameter $\eta$). This theory sits at the core of modern diffusion-based generative models, where new denoisers can be employed in the backward process of diffusion models \cite{sohl2015deep,song2020score} to replace the stochastic backward diffusion process with a deterministic optimal denoising map \cite{chen2023probability,liang2024denoising} that achieves higher-order accuracy in denoising the diffusions.

\section{Denoisers, Signal and Noise}

\subsection{Differential Equations for Optimal Denoisers}
\label{sec:denoising-ode}
We introduce a noise parameter scaling $\eta = \sigma^2/2$, then define the first- and second-order shrinkage denoisers (in $\eta$) as local-to-identity maps
\begin{align*}
\mathbf{T}_1^{f}(y) &:= y + \eta \nabla f(y) \;,\\
\mathbf{T}_2^{f,g}(y) &:= y + \eta \nabla f(y) + \frac{\eta^2}{2} \nabla g(y) \;,
\end{align*}
where $f, g: \mathbb{R}^d \rightarrow \mathbb{R}$ are smooth functions to be determined that induce the shrinkage directions. Our goal is to find the optimal functions $f, g$ such that the denoised distributions $\mathbf{T}_1^{f} \sharp P_Y$ and $\mathbf{T}_2^{f,g} \sharp P_Y$ match the true signal distribution $P_X$ with high accuracy.

Recall that $q: \mathbb{R}^d \rightarrow \mathbb{R}$ is the density function of the noisy measurement distribution $P_Y$. 
It turns out the optimal denoisers in terms of distributional matching up to $O(\eta^2)$ and $O(\eta^3)$ satisfy the following system of differential equations,
\begin{align*}
\text{optimal first-order equation:} \quad q \nabla f &=  \nabla q \;,  \\
\text{optimal second-order equation:} \quad q \nabla g &=   \nabla^2 f \nabla q + \Delta q \nabla f - \nabla \Delta q  \;.
\end{align*}
As a result, the optimal denoisers are induced by the following choice of $f, g$,
\begin{align}
f_* &= \log q \;, \label{eq:denoiser-first-order}\\
g_* &= -\left[ \frac{1}{2} \| \nabla f_* \|^2 + \nabla \cdot \nabla f_* \right] = - \left[ \frac{1}{2} \| \nabla \log q \|^2 + \Delta \log q  \right]\;. \label{eq:denoiser-second-order}
\end{align}

These equations may appear mysterious at first glance. Their origin will be elucidated in Section~\ref{sec:monge-ampere-optimality}, where we study the Monge-Ampère equation that characterizes the optimal transport map between the noisy measurement distribution $P_Y$ and the true signal distribution $P_X$. The denoiser is optimal in that it approximately solves the static Monge-Ampère equation with accuracy $O(\eta^2)$ and $O(\eta^3)$, respectively. Notably, the optimal denoisers depend only on the noisy distribution $P_Y$ through its score function $\nabla \log q$ and its derivatives, and are agnostic to the exact noise distribution $P_Z$ and the signal distribution $P_X$. This implies that one can approximate the optimal transport map for denoising $P_Y$ to $P_X$, requiring only knowledge of $P_Y$ and the noise level $\sigma$.

Our denoisers can be learned from data using score matching techniques \cite{hyvarinen2005estimation}. Recall the Fisher divergence minimization approach for score matching, which solves
\begin{align}
\min_{\xi \in \mathcal{S}} \E_{Y \sim q}\frac{1}{2}\|\xi(Y) - \nabla \log q(Y) \|^2 = \min_{\xi: \mathbb{R}^d \rightarrow \mathbb{R}^d} \E_{Y \sim q} \left[ \frac{1}{2} \| \xi(Y) \|^2 +  \nabla \cdot \xi(Y) \right] + \mathrm{Const} \;, \label{eqn:fisher-divergence-minimization}
\end{align}
where $\mathcal{S}$ is a vector-valued function class for score functions $\xi: \mathbb{R}^d \rightarrow \mathbb{R}^d$. The optimal solution to the above optimization problem is exactly $\xi_* = \nabla f_* = \nabla \log q$, the gradient of the optimal first-order denoiser.

With an estimated score function $\widehat{\xi}$ that is smooth, we can directly plug it in to obtain data-driven first- and second-order denoisers
\begin{align}	
\widehat{\mathbf{T}}_1(y) &:= y + \eta \widehat{\xi}(y) \; \, \label{eqn:est-first-order-denoiser}\\
\widehat{\mathbf{T}}_2(y) &:= y + \eta \widehat{\xi}(y) - \frac{\eta^2}{2} \nabla \left[ \frac{1}{2} \| \widehat{\xi}(y) \|^2 +  \nabla \cdot \widehat{\xi}(y) \right] \;, \label{eqn:est-second-order-denoiser}
\end{align}
Note that the equation in \eqref{eqn:est-second-order-denoiser} precisely matches the functional form in the score matching objective \eqref{eqn:fisher-divergence-minimization}. Therefore, the denoisers can be obtained directly from the estimated score function $\widehat{\xi}$ and computed efficiently using automatic differentiation in modern machine learning frameworks. We implement this approach in our numerical experiments; see Section~\ref{sec:numerics} for details of our implementation.

Again, there has been a growing literature on the consistency and estimation accuracy of score matching from finite samples, see \cite{sriperumbudur2017density, block2020generative, koehler2022statistical, ghosh2025stein, feng2024optimal}. Our work complements these results. With an estimated score function $\widehat{\xi}$, we can directly plug in to obtain estimated denoisers $\widehat{\mathbf{T}}_1$ and $\widehat{\mathbf{T}}_2$ as in \eqref{eqn:est-second-order-denoiser}, designed to achieve higher-order accuracy in distributional denoising, a key task in modern diffusion-based generative models \cite{sohl2015deep,song2020score}. 

\subsection{Assumptions on the Signal and Noise}
\label{sec:assumptions}

In this section, we discuss the assumptions imposed on the signal distribution $P_X$ and the noise distribution $P_Z$, in preparation for the theoretical results in terms of denoising quality; Section~\ref{sec:moment-matching} discusses moment matching property for general moments, and Section~\ref{sec:monge-ampere-optimality} studies distributional matching in an affinity measure induced by Monge-Ampère equation. 

The first assumption is made on the signal density $p = P_X$ with respect to the Lebesgue measure. We require smoothness conditions on $p$. This is a standard assumption in nonparametric statistics, known as $k$-th order H\"{o}lder smoothness. We only need smoothness up to $k=4$ for the accuracy of the first-order denoiser, and up to $k=6$ for that of the second-order denoiser, independent of the dimension $d$.

\begin{assumption}[$k$-th Order Smoothness]
	\label{assump:density-smoothness}
	The density $p: \mathbb{R}^d \rightarrow \mathbb{R}_{\geq 0}$ satisfies
	\begin{align*}
	\| \nabla^k p \|_{\infty} := \sup_{x \in \mathbb{R}^d} \| \nabla^k p(x) \|_{\infty} < \infty \;,
	\end{align*}	
	where $\| \nabla^k p(x) \|_{\infty} := \max_{i_1, \ldots, i_k \in [d]} | \partial_{x_{i_1}} \cdots \partial_{x_{i_k}} p(x) |$.
\end{assumption}

The second assumption is made on the noise variable $Z \in \mathbb{R}^d$. We require certain moment conditions on $Z$. In particular, we do not assume Gaussianity nor independence across coordinates for $Z$. The first-order denoiser defined in \eqref{eq:denoiser-first-order} remains universal and agnostic to the exact noise distribution under Assumption~\ref{assump:noise-moments}(i), whereas the second-order denoiser defined in \eqref{eq:denoiser-second-order} requires both Assumption~\ref{assump:noise-moments}(i) \& (ii).

\begin{assumption}[Moments of Noise]
	\label{assump:noise-moments}
	The noise $Z \in \mathbb{R}^d$ is symmetric around $0$, namely, $Z \overset{d}{=} -Z$. For any $i,j,k,l \in [d]$:

	(i) $\E[Z_i Z_j] = \delta_{ij}$, and $\E[|Z_i|^4] < \infty$ ;

	(ii) $\E[Z_i Z_j Z_k Z_l] = \delta_{ij}\delta_{kl} + \delta_{ik}\delta_{jl} + \delta_{il}\delta_{jk}$, and $\E[|Z_i|^6] < \infty$.

\end{assumption}

As we shall see, for the theoretical result on the first-order denoiser, only Assumption~\ref{assump:noise-moments}(i) and Assumption~\ref{assump:density-smoothness} up to $k=4$, are needed. In particular, we only require the noise variable $Z$ to be uncorrelated, $\E[Z Z^\top]=I_d$, and to have bounded 4-th moments; no need to assume Gaussianity or independence across coordinates. In this sense, the first-order denoiser is universal to a large class of noise distributions.

For the theoretical result on the second-order denoiser, both Assumption~\ref{assump:noise-moments}(i) \& (ii) are needed, and Assumption~\ref{assump:density-smoothness} up to $k=6$. Admittedly, Assumption~\ref{assump:noise-moments}(ii) is more restrictive, as it requires the noise $Z$ to have Gaussian-like $4$-th moments. However, we note that this assumption remains significantly weaker than assuming Gaussianity. In this context, the second-order denoiser is less universal than the first-order denoiser but still agnostic to the specific form of noise distribution. The fourth-moment condition can be relaxed to allow for arbitrary $Z$ with i.i.d. coordinates with a finite fourth-moment, at the cost of a slightly more complex denoising equation, which will be discussed in Section~\ref{sec:monge-ampere-optimality}.

\section{Matching Generalized Moments}
\label{sec:moment-matching}
In this section, we study how the distribution $\mathbf{T} \sharp P_Y$ matches $P_X$ in terms of generalized moments.

Considering a host of test functions $m: \mathbb{R}^d \rightarrow \mathbb{R} \in \mathcal{M}$, we compare how the moments of $\mathbf{T} \sharp P_Y$ and $P_X$ match:
\begin{align}
\big| \E_{Y \sim P_Y} [ m(\mathbf{T}(Y)) ] - \E_{X \sim P_X} [ m(X) ] \big| \;.
\end{align}
Conceptually, $\mathcal{M}$ is a class of test functions \cite{muller1997integral}, or called the generalized moments \cite{hansen1982large}. When $\mathcal{M}$ is the class of all bounded measurable functions, then matching over this class reduces to $\mathbf{T} \sharp P_Y$ being close to $P_X$ in total variation distance. When $\mathcal{M}$ is the class of all Lipschitz functions, then these two distributions are close in Wasserstein distance. When $\mathcal{M}$ is the class of polynomials up to degree $k$, then these two distributions are close in the moments up to order $k$.

We investigate the performance of denoisers (induced by $f_*, g_*$) under an integral probability metric \cite{muller1997integral,gretton2012kernel,liang2019estimating} induced by a class of smooth test functions $\mathcal{M}$. 
Note that $f_*, g_*$ depend only on the density $q = P_Y$, and thus the following integrability assumption is imposed on derivatives of $q$ to ensure $f_*, g_*$ are well-behaved. This mild regularity assumption holds generally when the density $q$ is smooth and decays at infinity.

\begin{assumption}[Integrability of Denoisers]
	\label{assump:integrability}
	Recall the definition of $f_*, g_*$ in \eqref{eq:denoiser-first-order} and \eqref{eq:denoiser-second-order} that depends on $q = P_Y$.  The density $q$ satisfies, for any $\eta \in [0, c)$ for some constant $c > 0$:
	
	(i) $\int_{\mathbb{R}^d} \| \nabla f_*(y) \|^2 q(y) \dd y < \infty$;

	(ii) $\int_{\mathbb{R}^d} \| \nabla f_*(y) \|^3 + \| \nabla g_*(y) \|^3  q(y) \dd y < \infty$, and $\sup_{y \in \mathbb{R}^d} \| \nabla f_*(y)\|^2 q(y) < \infty$.

\end{assumption}

\subsection{First-Order Denoiser}

We first define the class of second-order smooth test functions $\mathcal{M}^2(\mathbb{R}^d)$, which will be used to analyze the performance of the first-order denoiser $\mathbf{T}_1^{f_*}$.

\begin{definition}[2-Smooth Test Functions]
	\label{def:2-smooth-test-functions}
	The class of 2-smooth test functions $\mathcal{M}^2(\mathbb{R}^d)$ is defined as
	\begin{equation*}
	\cM^{2}(\mathbb{R}^d) := \left\{ m: \mathbb{R}^d \rightarrow \mathbb{R} \;\middle|\;
	\begin{aligned}
	&\sup_{y \in \mathbb{R}^d} \| \nabla^2 m(y) \|_{\infty} < \infty, \quad m~\text{vanishes at infinity} \;, \\
	&\int_{\mathbb{R}^d} |m(y)| \dd y < \infty, \quad \int_{\mathbb{R}^d} \|\nabla m(y) \|_{\infty} \dd y < \infty \;.
	\end{aligned} 
	\right\} \;.
	\end{equation*}
\end{definition}

Some explanations are in order. First, the integrability conditions impose decay conditions on $m$ and its gradient $\nabla m$ at infinity. This rules out unbounded test functions that grow to infinity, such as polynomials. However, there is a host of test functions that satisfy these conditions, for example, compactly supported smooth functions. Second, when the signal distribution $P_X$ has a compact support or decays fast at infinity, it is natural only to consider test functions supported on a compact set $\Omega \subset \mathbb{R}^d$, with smooth extensions to the entire space $\mathbb{R}^d$ that vanish. Such a consideration is without loss of generality for bounded test functions, or unbounded test functions with a slow growth compared to the decay of $P_X$ at infinity. Third, let $\varepsilon>0$, one can consider $m$ to be any continuous function supported on a compact set $\Omega$ (including polynomials), then consider smoothed test functions $m * \varphi_{\varepsilon}$ convolved with a mollifier $\varphi_{\epsilon}$ \cite{evans2022partial}, say $\varphi_{\varepsilon}(x) = \varepsilon^{-d} \varphi(x/\varepsilon)$ with $$\varphi(x) = \begin{cases}
C \exp( -\frac{1}{1-\|x\|^2} ) & \|x\| < 1 \;,\\
0 &  \|x\| \geq 1 \;.
\end{cases}$$
to obtain a smooth approximation of $m$ that naturally satisfies all conditions in $\mathcal{M}^2(\mathbb{R}^d)$.

\begin{theorem}[Moment Matching: Optimal First-Order Denoiser]
	\label{thm:first-order}
	Under Assumption~\ref{assump:integrability}(i), we consider the first-order denoiser $\mathbf{T}_1^{f_*}(y)$ defined in \eqref{eq:denoiser-first-order},
	$$\mathbf{T}_1^{f_*}(y) = y + \frac{\sigma^2}{2} \nabla \log q(y)\;.$$
	Let $p = P_X$ satisfy Assumption~\ref{assump:density-smoothness} for up to $k=4$, and the noise $Z$ has bounded 4-th moments as in Assumption~\ref{assump:noise-moments}(i). We then have for $\forall m \in \mathcal{M}^2(\mathbb{R}^d)$, the $2$-smooth test function as in Definition~\ref{def:2-smooth-test-functions},
	\begin{align*}
	 \big| \E_{Y \sim P_Y} [ m(\mathbf{T}_1^{f_*}(Y)) ] - \E_{X \sim P_X} [ m(X) ] \big| \leq C_{m} \cdot \sigma^4 \;.
	\end{align*}
	Here, the universal constant $C_{m} > 0$ only depends on the test function $m$.
\end{theorem}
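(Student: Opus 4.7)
The plan is to show $|\E[m(\mathbf{T}_1^{f_*}(Y))] - \E[m(X)]| = O(\sigma^4)$ by combining two Taylor expansions — one of the test function $m$ along the shift $\eta \nabla \log q(y)$, the other of the signal density $p$ under convolution with $\sigma Z$. The leading $O(\eta)$ contributions from each expansion will cancel upon integration, leaving only $O(\eta^2) = O(\sigma^4)$ terms. This cancellation is precisely what pins down $f_* = \log q$ as the correct first-order direction.

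First, I would expand the test function. Writing $\mathbf{T}_1^{f_*}(y) - y = \eta \nabla \log q(y)$ and Taylor expanding $m$ to second order yields
\[
m(\mathbf{T}_1^{f_*}(y)) = m(y) + \eta \nabla m(y)^\top \nabla \log q(y) + R_m(y, \eta),
\]
with $|R_m(y,\eta)| \leq C \eta^2 \|\nabla^2 m\|_\infty \|\nabla \log q(y)\|^2$ (using the 2-smoothness of $m$ in Definition~\ref{def:2-smooth-test-functions}). Integrating against $q$, the remainder contributes $O(\eta^2) \cdot \int \|\nabla \log q\|^2 q \, \dd y = O(\sigma^4)$ by Assumption~\ref{assump:integrability}(i). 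The linear term becomes, via $\nabla q = q \nabla \log q$ and integration by parts, $-\eta \int m(y) \Delta q(y) \, \dd y$; boundary terms vanish as a routine consequence of $\nabla m \in L^1$ and the decay of $q, \nabla q$ at infinity, inherited from $p$ being a probability density with bounded derivatives.

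Next, I would expand the density. Writing $q(y) = \int p(y - \sigma z) \phi(z) \, \dd z$ and Taylor-expanding $p$ to fourth order, the first- and third-order terms in $\sigma z$ vanish by the symmetry $Z \overset{d}{=} -Z$; the second-order term contributes $\eta \Delta p(y)$ using $\E[Z_i Z_j] = \delta_{ij}$ from Assumption~\ref{assump:noise-moments}(i); the fourth-order Taylor remainder is pointwise bounded by $C \sigma^4 \|\nabla^4 p\|_\infty \E\|Z\|^4$, finite by Assumption~\ref{assump:density-smoothness} with $k = 4$ and the fourth moment of $Z$. This yields $q(y) = p(y) + \eta \Delta p(y) + \tilde r_1(y)$ with $|\tilde r_1(y)| = O(\sigma^4)$ uniformly in $y$. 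Applying the same expansion to $\Delta p$ instead of $p$ gives $\Delta q(y) = \Delta p(y) + O(\sigma^2)$, so $\eta \Delta q(y) = \eta \Delta p(y) + O(\sigma^4)$ pointwise. The key algebraic cancellation is then
\[
q(y) - \eta \Delta q(y) = p(y) + O(\sigma^4) \quad \text{uniformly in } y,
\]
and integrating against $m$ (using $\int |m| \, \dd y < \infty$) gives $\int m(y)[q(y) - \eta \Delta q(y)] \, \dd y = \E[m(X)] + O(\sigma^4)$. Combined with the first step, this establishes the theorem.

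The main difficulty will be careful bookkeeping of Taylor remainders under the minimal smoothness given: the integrability bounds in $\mathcal{M}^2$ and the 4-smoothness of $p$ are just enough, so each $O(\sigma^4)$ error must be traced as a product of a uniform pointwise bound (depending on $\|\nabla^4 p\|_\infty$ or $\|\nabla^2 m\|_\infty$ and finite moments of $Z$) against an integrable quantity ($\int|m|$, $\int \|\nabla m\|_\infty$, or the Fisher information $\int \|\nabla \log q\|^2 q$). The conceptual heart of the argument — that the $O(\eta)$ contributions from the two expansions cancel exactly, which forces $f_* = \log q$ — becomes transparent once the identities are lined up, and the same template will extend to higher-order accuracy when $\mathbf{T}_2^{f_*, g_*}$ is analyzed.
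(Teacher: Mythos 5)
Your proof is correct and follows essentially the same approach as the paper: second-order Taylor expansion of $m$ with the remainder controlled via $\|\nabla^2 m\|_\infty$ and the Fisher information $\int\|\nabla\log q\|^2 q$ (Assumption~\ref{assump:integrability}(i)), a fourth-order convolution expansion of $q$ around $p$ using the symmetry and moment structure of $Z$ (this is exactly the paper's Lemma~\ref{lem:convolution-expansion}(i)), and a single integration by parts to effect the order-$\eta$ cancellation. The only cosmetic difference is bookkeeping: you integrate by parts on the test-function side, turning $\eta\int\langle\nabla m,\nabla q\rangle$ into $-\eta\int m\,\Delta q$ and then collapsing everything into the pointwise identity $q-\eta\Delta q=p+O(\sigma^4)$ paired against $m\in L^1$, whereas the paper integrates by parts on the density side ($\int m\,\Delta p=-\int\langle\nabla m,\nabla p\rangle$) and then replaces $\nabla p$ by $\nabla q+O(\eta)$; both routes yield the same vanishing of the $\eta$ coefficient $\int\langle\nabla m, q\nabla f_*-\nabla q\rangle$.
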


This theorem states that for a smooth test function $m \in \cM^2(\mathbb{R}^d)$, the first-order denoiser $\mathbf{T}_1^{f_*}$ matches the generalized moments (induced by $m$) of the signal distribution $P_X$ up to an error of order $O(\sigma^4)$. Such a result holds as long as the signal distribution $P_X$ is smooth enough (up to 4-th order), and the noise distribution $P_Z$ has bounded 4-th moments and is uncorrelated. In this sense, the first-order denoiser is universal to a broad class of signal and noise distributions; it is agnostic to exact forms of both $P_X$ and $P_Z$, allowing both to be general nonparametric distributions. Thus, we call them universal denoisers.

It is possible to state this theorem uniformly over a class of test functions $\mathcal{M}^2(\mathbb{R}^d)$, by imposing uniform bounds on the derivatives of $m \in \mathcal{M}^2(\mathbb{R}^d)$, and uniform integrability conditions. For simplicity, we present the result for each fixed test function $m$. This is because to apply the result to an unbounded test function $m$, one may need to look at a sequence of bounded approximations of $m$, as discussed in the explanations after Definition~\ref{def:2-smooth-test-functions}. Such an approximation will require the uniform bounds on the derivatives to grow at a specific rate, which in turn affects the constant $C_m$. This is not the focus of this paper, and we leave it to future work.

We now show that $\mathbf{T}_1^{f_*}$ offers an improvement over the classical Bayes-optimal denoiser that only achieves $\Theta(\sigma^2)$ accuracy, as an immediate corollary. 

\begin{corollary}[Moment Matching: Bayes-optimal Denoiser]
	\label{cor:bayes-optimal}
	Consider the same setting and assumptions as in Theorem~\ref{thm:first-order}. The Bayes-optimal denoiser $$\mathbf{T}^\star(y) = y + \sigma^2 \nabla \log q(y)\;,$$ satisfies for $\forall m \in \mathcal{M}^2(\mathbb{R}^d)$,
	\begin{align*}
	  \E_{Y \sim P_Y} [ m(\mathbf{T}^\star(Y)) ] - \E_{X \sim P_X} [ m(X) ]  = \frac{\sigma^2}{2} \E_{Y \sim P_Y} [\langle \nabla m(Y), \nabla \log q(Y) \rangle]  + O(\sigma^4) \;.
	\end{align*}
\end{corollary}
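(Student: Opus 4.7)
The plan is to reduce the corollary to Theorem~\ref{thm:first-order} via a one-step Taylor expansion, exploiting the algebraic identity
\[
\mathbf{T}^\star(y) = \mathbf{T}_1^{f_*}(y) + \frac{\sigma^2}{2} \nabla \log q(y) \;,
\]
which says the Bayes-optimal denoiser is just the first-order denoiser displaced by another half-step along $\nabla \log q$. Writing $v(y) := \tfrac{\sigma^2}{2}\nabla\log q(y)$, I would expand $m(\mathbf{T}^\star(Y)) = m(\mathbf{T}_1^{f_*}(Y) + v(Y))$ to first order in $v(Y)$, obtaining a leading term $m(\mathbf{T}_1^{f_*}(Y))$, a linear term $\langle \nabla m(\mathbf{T}_1^{f_*}(Y)), v(Y)\rangle$, and a quadratic remainder bounded by $\tfrac{1}{2}\|\nabla^2 m\|_\infty \|v(Y)\|^2$.

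After taking expectations, the three pieces would be handled as follows. First, Theorem~\ref{thm:first-order} immediately gives $\E[m(\mathbf{T}_1^{f_*}(Y))] = \E[m(X)] + O(\sigma^4)$. Second, the quadratic remainder is $O(\sigma^4) \cdot \|\nabla^2 m\|_\infty \cdot \E\|\nabla \log q(Y)\|^2$, and this expectation is finite by Assumption~\ref{assump:integrability}(i), while $\|\nabla^2 m\|_\infty$ is finite from the definition of $\mathcal{M}^2(\mathbb{R}^d)$. Third, for the linear term I would perform a second Taylor expansion $\nabla m(\mathbf{T}_1^{f_*}(Y)) = \nabla m(Y) + \nabla^2 m(\xi) \cdot \tfrac{\sigma^2}{2}\nabla\log q(Y)$ to replace the evaluation point by $Y$; the error introduced this way, once multiplied by the outer factor $v(Y)$ of order $\sigma^2 \nabla\log q(Y)$, contributes $O(\sigma^4)\cdot\E\|\nabla\log q(Y)\|^2 = O(\sigma^4)$, again by Assumption~\ref{assump:integrability}(i) and the uniform bound on $\|\nabla^2 m\|_\infty$.

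Collecting terms yields exactly
\[
\E[m(\mathbf{T}^\star(Y))] - \E[m(X)] = \frac{\sigma^2}{2}\E[\langle \nabla m(Y), \nabla\log q(Y)\rangle] + O(\sigma^4) \;,
\]
which is the claim. The only point that requires minor care is ensuring that the intermediate evaluation point $\xi$ on the line segment between $Y$ and $\mathbf{T}_1^{f_*}(Y)$ does not escape the regime where $\|\nabla^2 m\|_\infty$ controls things; this is automatic because the bound on $\|\nabla^2 m\|_\infty$ in Definition~\ref{def:2-smooth-test-functions} is taken as a global supremum over $\mathbb{R}^d$, so no localization argument is needed. The main conceptual step is really the identity $\mathbf{T}^\star = \mathbf{T}_1^{f_*} + v$: once this is observed, the corollary is a one-line Taylor computation with $O(\sigma^4)$ remainders absorbed into the error from Theorem~\ref{thm:first-order}. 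I do not anticipate any substantive obstacle; the integrability hypothesis in Assumption~\ref{assump:integrability}(i) is precisely what is needed, since both remainder terms reduce to $\sigma^4$ times the second moment of $\nabla\log q(Y)$.
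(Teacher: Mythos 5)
Your proof is correct, but it takes a genuinely different route from the paper's. The paper's proof of this corollary does not reduce to the \emph{statement} of Theorem~\ref{thm:first-order}; instead it recalls an intermediate identity from the \emph{proof} of that theorem, which holds for a general shrinkage direction $f$:
\begin{align*}
 \E_{Y \sim P_Y} [ m(\mathbf{T}_1^{f}(Y)) ] - \E_{X \sim P_X} [ m(X) ] = \eta \int_{\mathbb{R}^d} \langle \nabla m(y), q(y) \nabla f(y) - \nabla q(y) \rangle \,\mathrm{d}y + O(\eta^2) \;,
\end{align*}
and then observes that the Bayes-optimal denoiser is $\mathbf{T}_1^{f}$ with $f = 2\log q$, for which $q\nabla f - \nabla q = \nabla q$, giving $\eta \int \langle\nabla m, \nabla q\rangle = \tfrac{\sigma^2}{2}\E_Y[\langle\nabla m(Y),\nabla\log q(Y)\rangle]$ directly. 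Your argument instead treats Theorem~\ref{thm:first-order} as a black box (its conclusion is stated only for $f_*$), writes $\mathbf{T}^\star = \mathbf{T}_1^{f_*} + v$ with $v = \tfrac{\sigma^2}{2}\nabla\log q$, and performs two more Taylor expansions, relying on $\|\nabla^2 m\|_\infty < \infty$ and the integrability $\E\|\nabla\log q(Y)\|^2 < \infty$ from Assumption~\ref{assump:integrability}(i) to absorb the remainders into $O(\sigma^4)$. Both arrive at the same answer. The paper's proof is shorter because the proof of Theorem~\ref{thm:first-order} already contains the general-$f$ formula ready for substitution, whereas your proof is more modular and self-contained, using only what the theorem statement advertises --- a useful stylistic distinction, but the trade-off is two additional Taylor expansions. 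Each step of your argument is justified by the hypotheses already in force for Theorem~\ref{thm:first-order}, so there is no gap.
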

This corollary states a lower bound on the behavior of the Bayes-optimal denoiser in terms of moment matching. In general, $\E_{Y \sim P_Y} [\langle \nabla m(Y), \nabla \log q(Y) \rangle]$ is non-zero, therefore the Bayes-optimal denoiser only achieves $\Theta(\sigma^2)$ accuracy in moment matching. This is in stark contrast to the first-order denoiser $\mathbf{T}_1^{f_*}$ that achieves $O(\sigma^4)$ accuracy, an order-of-magnitude improvement.

When both the signal and noise are Gaussian distributions, one can take $m = -\log q$, a quadratic function. Symbolically, the leading term on the right-hand side of Corollary~\ref{cor:bayes-optimal} is negative, confirming the intuition of over-shrinkage in second moments by the Bayes-optimal denoiser.

\subsection{Second-Order Denoiser}
We now define the class of third-order smooth test functions $\mathcal{M}^3(\mathbb{R}^d)$, which will be used to analyze the performance of the second-order denoiser $\mathbf{T}_2^{f_*, g_*}$.
\begin{definition}[3-Smooth Test Functions]
	\label{def:3-smooth-test-functions}
	The class of 3-smooth test functions $\mathcal{M}^3(\mathbb{R}^d)$ is defined as
	\begin{equation*}
	\cM^{3}(\mathbb{R}^d) := \left\{ m: \mathbb{R}^d \rightarrow \mathbb{R} \;\middle|\;
	\begin{aligned}
	&\sup_{y \in \mathbb{R}^d} \| \nabla^3 m(y) \|_{\infty} < \infty, \quad m, \nabla m ~\text{vanish at infinity} \;, \\
	& \int_{\mathbb{R}^d} |m(y)| \dd y < \infty, \quad \int_{\mathbb{R}^d} \|\nabla m(y) \|_{\infty} \dd y < \infty \;. 
	\end{aligned} 
	\right\} \;.
	\end{equation*}
\end{definition}

\begin{theorem}[Moment Matching: Optimal Second-Order Denoiser]
	\label{thm:second-order}
	Under Assumption~\ref{assump:integrability}(ii), we consider the second-order denoiser $\mathbf{T}_2^{f_*, g_*}$ defined in \eqref{eq:denoiser-second-order}, 
	$$\mathbf{T}_2^{f_*, g_*}(y) = y + \frac{\sigma^2}{2} \nabla \log q(y) - \frac{\sigma^4}{8} \nabla \left( \frac{1}{2} \| \nabla \log q(y) \|^2 + \nabla \cdot \nabla \log q(y) \right)\;.$$
	Let $p = P_X$ satisfy Assumption~\ref{assump:density-smoothness} for up to $k=6$, and the noise $Z$ has bounded 6-th moments as in Assumption~\ref{assump:noise-moments}(i) \& (ii). We then have for $\forall m \in \mathcal{M}^3(\mathbb{R}^d)$, the $3$-smooth test function as in Definition~\ref{def:3-smooth-test-functions},
	\begin{align*}
	 \big| \E_{Y \sim P_Y} [ m(\mathbf{T}_2^{f_*, g_*}(Y)) ] - \E_{X \sim P_X} [ m(X) ] \big| \leq C_{m} \cdot \sigma^6 \;.
	\end{align*}
	Here, the universal constant $C_{m} > 0$ only depends on the test function $m$.
\end{theorem}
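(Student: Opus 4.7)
My plan is to reduce the problem to an order-by-order cancellation in $\eta = \sigma^2/2$ by inserting the common anchor $\E[m(Y)]$:
\begin{align*}
\E[m(\mathbf{T}_2^{f_*,g_*}(Y))] - \E[m(X)] = \bigl(\E[m(\mathbf{T}_2^{f_*,g_*}(Y))] - \E[m(Y)]\bigr) - \bigl(\E[m(X)] - \E[m(Y)]\bigr).
\end{align*}
I would expand each bracket through order $\eta^2$, with $O(\eta^3)$ remainders, and then show that the $\eta^1$ and $\eta^2$ coefficients match exactly, so that only the $O(\eta^3) = O(\sigma^6)$ remainder survives. This mirrors the scheme behind Theorem~\ref{thm:first-order} but pushes the cancellation one order further, and is precisely where the second-order correction $g_*$ is needed.

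For the first bracket, write $v(y) := \eta \nabla f_*(y) + \tfrac{\eta^2}{2}\nabla g_*(y)$ and Taylor-expand $m(y+v)-m(y)$ with second-order Lagrange remainder. Taking expectation under $Y\sim q$ and collecting powers of $\eta$ produces
\begin{align*}
\E[m(\mathbf{T}_2^{f_*,g_*}(Y))] - \E[m(Y)] &= \eta\,\E[\nabla m \cdot \nabla f_*] \\
&\quad + \tfrac{\eta^2}{2}\bigl(\E[\nabla m \cdot \nabla g_*] + \E[(\nabla f_*)^\top \nabla^2 m \,\nabla f_*]\bigr) + O(\eta^3),
\end{align*}
where the $O(\eta^3)$ remainder is controlled by $\sup\|\nabla^3 m\|_\infty$ together with the cubic integrability of $\nabla f_*$ and $\nabla g_*$ under $q$ from Assumption~\ref{assump:integrability}(ii). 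For the second bracket, I Taylor-expand $p(y-\sigma z)$ to fifth order in $\sigma z$ and integrate against $P_Z$: symmetry $Z \overset{d}{=} -Z$ kills the odd orders, Assumption~\ref{assump:noise-moments}(i) converts the $\sigma^2$-term into $\Delta p$, and Assumption~\ref{assump:noise-moments}(ii) collapses the $\sigma^4$-term into $3\Delta^2 p$. Together with Assumption~\ref{assump:density-smoothness} at $k=6$ and the bounded sixth moments of $Z$, this yields the uniform heat-kernel-like expansion $q = p + \eta\Delta p + \tfrac{\eta^2}{2}\Delta^2 p + O(\eta^3)$; integrating $m(p - q)$ and performing two and four successive integrations by parts gives
\begin{align*}
\E[m(X)] - \E[m(Y)] = -\eta\,\E_{Y\sim q}[\Delta m(Y)] + \tfrac{\eta^2}{2}\,\E_{Y\sim q}[\Delta^2 m(Y)] + O(\eta^3).
\end{align*}

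The central algebraic step is verifying the two resulting cancellations. The $\eta^1$ identity $\E[\nabla m \cdot \nabla f_*] + \E[\Delta m] = 0$ is a single IBP using $q\nabla f_* = \nabla q$, and reproduces the calculation behind Theorem~\ref{thm:first-order}. The new ingredient is the $\eta^2$ identity $\E[\nabla m \cdot \nabla g_*] + \E[(\nabla f_*)^\top \nabla^2 m\,\nabla f_*] = \E[\Delta^2 m]$. Plugging in the defining PDE $q\nabla g_* = \nabla^2 f_*\nabla q + \Delta q\,\nabla f_* - \nabla\Delta q$ splits $\E[\nabla m \cdot \nabla g_*]$ into three integrals: the $-\nabla\Delta q$-piece gives $\E[\Delta^2 m]$ after one IBP and the conversion $\int m\Delta^2 q\,dy = \int q\Delta^2 m\,dy$; a single IBP on the $\Delta q$-piece produces $-\E[(\nabla f_*)^\top\nabla^2 m\,\nabla f_*]$ together with a term that, after using $\nabla q = q\nabla f_*$ and the symmetry of $\nabla^2 m$, cancels the $\nabla^2 f_*\nabla q$-piece exactly. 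Assembling the three pieces delivers the identity and hence the vanishing of the $\eta^2$ coefficient. The main obstacle is the parallel remainder control: one must bound the cubic Taylor remainder of $m\circ\mathbf{T}_2^{f_*,g_*}$ in $L^1(q)$ (handled by $\sup\|\nabla^3 m\|_\infty$ and the cubic-moment condition in Assumption~\ref{assump:integrability}(ii)), and simultaneously the $O(\eta^3)$ tail of the $q$-versus-$p$ expansion (pairing $\sup\|\nabla^6 p\|_\infty$ with $\E[\|Z\|^6]$); bundling these with the decay-at-infinity baked into $\mathcal{M}^3(\mathbb{R}^d)$ to justify every IBS and collect a single constant $C_m$ depending only on $m$ is routine but bookkeeping-heavy.
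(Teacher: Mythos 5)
Your proposal is correct and follows essentially the same approach as the paper: a third-order Taylor expansion of $m\circ\mathbf{T}_2^{f_*,g_*}$, the sixth-order convolution expansion of $q$ around $p$ (Lemma~\ref{lem:convolution-expansion}(ii)), and integration by parts under the decay and integrability conditions to show the $\eta$ and $\eta^2$ coefficients vanish given the defining differential equations for $f_*,g_*$. Your explicit anchoring at $\E[m(Y)]$ and the three-piece telescoping IBP verifying the $\eta^2$ identity are a clean reorganization of the paper's direct computation, which instead pushes the Hessian term through the chain-rule identity \eqref{eqn:long-chain-rule} and then invokes Lemma~\ref{lem:log-density-identities}.
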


This theorem requires slightly stronger conditions on the signal and noise distributions compared to Theorem~\ref{thm:first-order}. In particular, the signal distribution $P_X$ needs to be smooth up to 6-th order, and the noise distribution $P_Z$ needs to have bounded 6-th moments and a Gaussian-like $4$-th moment, namely $\E[Z_i^4] = 3$. Extensions to generic $\E[Z_i^4] = \kappa \in [1, \infty)$ for noise distribution with i.i.d. coordinates are possible, at the price of a slightly more complex second-order denoising equation. The result holds without requiring Gaussian assumptions, merely the knowledge of the 4-th moments of the noise. Under these conditions, the second-order denoiser $\mathbf{T}_2^{f_*, g_*}$ matches the generalized moments (induced by $m$) of the signal distribution $P_X$ up to an error of order $O(\sigma^6)$. This is an order-of-magnitude improvement over the first-order denoiser $\mathbf{T}_1^{f_*}$ that only achieves $O(\sigma^4)$ accuracy. 

Remark that the set of test functions $\mathcal{M}^3(\mathbb{R}^d)  \subseteq \mathcal{M}^2(\mathbb{R}^d)$, as we require bounded third-order derivatives. The trade-off is that with slightly stronger conditions on the signal distribution, noise distribution, and test functions, we can achieve higher-order accuracy in generalized moment matching, with a universal denoising map $\mathbf{T}_2^{f_*, g_*}$ that only depends on the score $\nabla \log q$ and its derivatives.

\section{Optimal Denoising and Monge-Ampère Equations}
\label{sec:monge-ampere-optimality}

In this section, we study how the denoised distribution $\mathbf{T} \sharp P_Y$ matches the signal distribution $P_X$ from the perspective of optimal transport maps and Monge-Ampère equations. We investigate how well the denoisers $\mathbf{T}_1^{f_*}$ and $\mathbf{T}_2^{f_*, g_*}$ approximate the static Monge-Ampère equation that characterizes the optimal transport map from $P_Y$ to $P_X$.

We measure $\mathbf{T} \sharp P_Y$ matches $P_X$ in terms of an affinity between densities motivated by Monge-Ampère equations \cite{caffarelli1992regularity,villani2008optimal,deb2025no}, restricted to a compact domain $\Omega \subseteq \mathbb{R}^d$,
\begin{align}
d_{\mathrm{MA}}( \mathbf{T} \sharp P_Y, P_X ) := \sup_{y \in \Omega} \big| p_Y(y) - p_X(\mathbf{T}(y)) \det( \nabla \mathbf{T}(y) ) \big|  \;.
\end{align}
This affinity stems from the study of static Monge-Ampère equations for optimal transport maps. It directly compares the point-wise match between the densities. Suppose $\mathbf{T}$ is the optimal transport map from $P_Y$ to $P_X$, then the static Monge-Ampère equation \cite{caffarelli1992regularity} states that
\begin{align*}
p_Y(y) = p_X(\mathbf{T}(y)) \det( \nabla \mathbf{T}(y) ) \;, \quad \forall y \in \Omega \;.
\end{align*}
Therefore, $d_{\mathrm{MA}}( \mathbf{T} \sharp P_Y, P_X )$ quantifies how well the Monge-Ampère equation is satisfied for a denoiser map $\mathbf{T}$, uniformly over the domain $\Omega$.

In the case when $\mathbf{T}: \Omega \rightarrow \Omega_X$ is a diffeomorphism on the support $\Omega \subseteq \mathbb{R}^d$ with $\inf_{y \in \Omega}\det( \nabla \mathbf{T}(y) ) > 0$, then $d_{\mathrm{MA}}( \mathbf{T} \sharp P_Y, P_X ) = 0$ if and only if $d_{\mathrm{TV}}( \mathbf{T} \sharp P_Y, P_X) = 0$. In fact, the MA affinity directly quantifies how the densities match point-wise $|p_{\mathbf{T}(Y)}(x) - p_X(x)|$, after accounting for the volume change induced by the transport map $x = \mathbf{T}(y)$.

One can verify that the optimal denoisers $\mathbf{T}_1^{f_*}$ and $\mathbf{T}_2^{f_*, g_*}$ are diffeomorphisms when $\eta$ is sufficiently small, as they are the gradient of strongly convex functions.
Therefore, for theoretical investigations conducted in this section, we require a small-noise condition on $\eta$.
\begin{assumption}[Small Noise Level]
	\label{assump:small-noise}
	Assume that the noise parameter $\eta = \sigma^2/2  \in [0, \delta)$ with a sufficiently small positive constant $\delta$ such that $\mathbf{T}_1^{f_*}$ and $\mathbf{T}_2^{f_*,g_*}$ are gradient of convex functions on $\Omega$, namely: 
	
	(i) $\inf_{y \in \Omega} I_d + \eta \nabla^2 f_* (y)  \succ 0$; 
	
	(ii) $\inf_{y \in \Omega} I_d + \eta \nabla^2 f_* (y) + \tfrac{\eta^2}{2} \nabla^2 g_* (y) \succ 0$.
\end{assumption}

As we evaluate the densities directly in this section, we require the following boundedness assumption on the denoisers induced by $f_*, g_*$ over $\Omega$, the domain of interest. This regularity condition is stronger than Assumption~\ref{assump:integrability} and requires point-wise boundedness. For simplicity, we only concern a compact domain $\Omega \subseteq \mathbb{R}^d$, which may be taken as a subset of the support of $P_Y$. 

\begin{assumption}[Boundedness of Denoisers]
	\label{assump:bdd-hessian}
	Recall the definition of $f_*, g_*$ in \eqref{eq:denoiser-first-order} and \eqref{eq:denoiser-second-order} that depends on $q = P_Y$.  The density $q$ satisfies, for any $\eta \in [0, c)$ for some constant $c > 0$:
	
	(i) $\sup_{y \in \Omega}  \| \nabla f_*(y) \| + \| \nabla^2 f_*(y) \| < \infty$;

	(ii) $\sup_{y \in \Omega} \| \nabla g_*(y) \| + \| \nabla^2 g_*(y) \| < \infty$.
\end{assumption}

Finally, we remark that a compact domain restriction is without loss of generality even in the case when $P_Y, P_X$ are supported on $\mathbb{R}^d$. One may define a domain $\Omega(\varepsilon)$ such that $p_Y(y), p_X(y) < \varepsilon/2, \forall y \notin \Omega(\varepsilon)$ for an arbitrarily small $\varepsilon > 0$, and considers the truncated denoiser
$$ \tilde{\mathbf{T}}(y) = \begin{cases} \mathbf{T}(y), & y \in \Omega(\varepsilon) \\ y, & y \notin \Omega(\varepsilon) \end{cases} \;.$$
Then, the MA affinity over the unbounded region satisfies
\begin{align*}
 \sup_{y \in \mathbb{R}^d} \big| p_Y(y) - p_X(\tilde{\mathbf{T}}(y)) \det( \nabla \tilde{\mathbf{T}}(y) ) \big|  \leq d_{\mathrm{MA}}( \mathbf{T} \sharp P_Y, P_X ) + \varepsilon \;.
\end{align*}
For any target accuracy level $\varepsilon  = \sigma^{k}, k \in \mathbb{N}^+$, one can work on the compact domain $\Omega(\varepsilon)$.

Now we are ready to state the main results in terms of distributional matching via the Monge-Ampère equation, where the differential equations for optimal denoisers given in Section~\ref{sec:denoising-ode} will arise naturally.

\subsection{First-Order Denoising Equation}

\begin{theorem}[Distributional Matching: Optimal First-Order Denoiser]
	\label{thm:distributional-matching-first-order}
	Under Assumptions~\ref{assump:small-noise}(i) and \ref{assump:bdd-hessian}(i), we consider the first-order denoiser $\mathbf{T}_1^{f}(y)$.

	Let $p = P_X$ satisfy Assumption~\ref{assump:density-smoothness} for up to $k=4$, and the noise $Z$ has bounded 4-th moments as in Assumption~\ref{assump:noise-moments}(i). Then with some universal constant $C_{\Omega} > 0$,
	\begin{align*}
	d_{\mathrm{MA}}( \mathbf{T}_1^{f} \sharp P_Y, P_X ) \leq C_{\Omega} \cdot \sigma^4 \;,
	\end{align*}
	if $f$ satisfies the differential equation
	$$
	q \nabla f =  \nabla q \;, 
	$$
\end{theorem}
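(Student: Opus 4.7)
The strategy is to Taylor-expand both $q(y)$ and the pushed-forward density $p(\mathbf{T}_1^{f}(y)) \det(\nabla \mathbf{T}_1^{f}(y))$ in powers of the small parameter $\eta = \sigma^2/2$, uniformly over the compact domain $\Omega$, and then observe that the denoising equation $q \nabla f = \nabla q$ is exactly what makes their leading $O(\eta)$ terms cancel, leaving an $O(\eta^2) = O(\sigma^4)$ residual. In effect, the first-order denoising equation is the solvability condition for eliminating the $O(\eta)$ Monge--Amp\`ere residual.

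First I would expand $q(y) = \E[p(y - \sigma Z)]$ via a fourth-order Taylor expansion of $p$ around $y$. The symmetry $Z \overset{d}{=} -Z$ annihilates the odd-order terms; the second-order term collapses to $\eta \Delta p(y)$ via $\E[Z Z^\top] = I_d$ from Assumption~\ref{assump:noise-moments}(i); and the integral remainder is bounded uniformly on $\Omega$ by $\| \nabla^4 p \|_{\infty}$ (Assumption~\ref{assump:density-smoothness} with $k=4$) times $\E[\| Z \|^4]$. Applying the same argument to $\nabla p$ delivers the companion expansion
\[
q(y) = p(y) + \eta \Delta p(y) + O(\sigma^4), \qquad \nabla q(y) = \nabla p(y) + \eta \nabla \Delta p(y) + O(\sigma^4) \;,
\]
uniformly on $\Omega$. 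Second, I would expand the denoised density: since $\nabla \mathbf{T}_1^{f} = I_d + \eta \nabla^2 f$, the identity $\det(I_d + \eta A) = 1 + \eta \, \mathrm{tr}(A) + O(\eta^2 \|A\|^2)$ combined with a first-order Taylor expansion of $p$ at $y$ (valid uniformly on $\Omega$ by the boundedness of $\nabla f$ and $\nabla^2 p$ from Assumptions~\ref{assump:bdd-hessian}(i) and \ref{assump:density-smoothness}) gives
\[
p(\mathbf{T}_1^{f}(y)) \det(\nabla \mathbf{T}_1^{f}(y)) = p(y) + \eta \bigl[ \langle \nabla p(y), \nabla f(y) \rangle + p(y) \Delta f(y) \bigr] + O(\eta^2) = p(y) + \eta \, \nabla \cdot (p \nabla f)(y) + O(\eta^2) \;.
\]

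The third step combines these expansions through the denoising equation. Writing $p \nabla f = q \nabla f - (q-p) \nabla f = \nabla q - (q-p) \nabla f$ and taking the divergence,
\[
\nabla \cdot (p \nabla f) = \Delta q - \nabla(q-p) \cdot \nabla f - (q-p) \Delta f = \Delta q + O(\eta) = \Delta p + O(\eta) \;,
\]
since $q-p$ and $\nabla(q-p)$ are both $O(\eta)$ by the Step 1 expansions, while $\nabla f$ and $\nabla^2 f$ are bounded on $\Omega$ by Assumption~\ref{assump:bdd-hessian}(i). Substituting into the Step 2 expansion and subtracting from the Step 1 expansion of $q(y)$ yields the claimed pointwise bound $| q(y) - p(\mathbf{T}_1^{f}(y)) \det(\nabla \mathbf{T}_1^{f}(y)) | = O(\sigma^4)$ uniformly in $y \in \Omega$.

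\textbf{Main obstacle.} The delicate part is bookkeeping the uniformity of the Taylor remainders with explicit dependence on derivative bounds over $\Omega$, and in particular showing that $\nabla \cdot ((q-p) \nabla f)$ is genuinely $O(\eta)$ rather than something larger---this is where differentiating the expansion of $q$ once (hence the need for smoothness of $p$ up to order $k=4$) and invoking the bound on $\nabla^2 f$ from Assumption~\ref{assump:bdd-hessian}(i) are both essential. Assumption~\ref{assump:small-noise}(i) ensures $\mathbf{T}_1^{f}$ is a diffeomorphism with positive Jacobian so that the Monge--Amp\`ere affinity is well-defined, but plays no role in the cancellation mechanism itself.
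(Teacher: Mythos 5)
Your proof is correct and takes essentially the same route as the paper: expand $q$, $p(\mathbf{T}_1^f)$, and the Jacobian determinant to first order in $\eta$, identify the $O(\eta)$ residual as $\Delta p - \langle\nabla p,\nabla f\rangle - p\,\Delta f = \Delta p - \nabla\cdot(p\nabla f)$, and show it is itself $O(\eta)$ by trading $p,\nabla p,\Delta p$ for $q,\nabla q,\Delta q$ (which costs $O(\eta)$) and then invoking the divergence of $q\nabla f = \nabla q$. One small nit: the second-order expansion $\nabla q = \nabla p + \eta\nabla\Delta p + O(\sigma^4)$ you state in Step 1 would need one derivative beyond $k=4$ to control its remainder; fortunately you never use it — only $\nabla(q-p)=O(\eta)$, which the convolution lemma does supply under $k=4$ — so the argument stands.
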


This theorem states that under mild conditions on the signal and noise distributions, the first-order denoiser $\mathbf{T}_1^{f}$ matches the signal distribution $P_X$ in terms of the Monge-Ampère affinity up to an error of order $O(\sigma^4)$, provided that $f$ satisfies the given differential equation. This differential equation justifies the optimality of the first-order denoiser $\mathbf{T}_1^{f_*}$, defined in \eqref{eq:denoiser-first-order}, as it approximates the Monge-Ampère equation up to $O(\sigma^4)$ accuracy. We call this universal denoiser because its validity in terms of distributional matching holds as long as the signal distribution $P_X$ is smooth enough (up to 4-th order), and the noise distribution $P_Z$ has bounded 4-th moments and is uncorrelated.

An immediate corollary is a lower bound showing that the classical Bayes-optimal denoiser only achieves $\Theta(\sigma^2)$ accuracy in terms of distributional matching, thus is suboptimal compared to the first-order denoiser $\mathbf{T}_1^{f_*}$, in the small constant noise regime.

\begin{corollary}[Distributional Matching: Bayes-optimal Denoiser]
	\label{cor:distributional-matching-bayes-optimal}
	Consider the same setting and assumptions as in Theorem~\ref{thm:distributional-matching-first-order}. The Bayes-optimal denoiser $$\mathbf{T}^\star(y) = y + \sigma^2 \nabla \log q(y)\;,$$ satisfies
	\begin{align*}
	  d_{\mathrm{MA}}( \mathbf{T}^\star \sharp P_Y, P_X ) \geq  \frac{\sigma^2}{2}  \sup_{y \in \Omega} \Delta q (y) - C_{\Omega} \cdot \sigma^4 \;,
	\end{align*}
	for some universal constant $C_{\Omega} > 0$.
\end{corollary}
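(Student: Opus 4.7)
The plan is to treat the Bayes-optimal denoiser as a first-order map of the same functional form as $\mathbf{T}_1^{f}$ but with $f$ replaced by $2 f_*$, i.e.\ $\mathbf{T}^\star(y) = y + \eta \nabla (2 f_*)(y)$, and to re-run the Taylor-expansion machinery behind Theorem~\ref{thm:distributional-matching-first-order}. The crucial difference is that the choice $f = f_*$ makes the leading $O(\eta)$ term cancel via $q \nabla f_* = \nabla q$, whereas the Bayes-optimal choice $f = 2 f_*$ leaves a non-cancelling residual that I expect to identify explicitly as $\eta \Delta q(y)$, which then produces the advertised $\Theta(\sigma^2)$ lower bound.

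The key calculation proceeds in two steps. First, for a generic smooth $f$ and $\mathbf{T}(y) = y + \eta \nabla f(y)$, Taylor expansion of $p_X$ around $y$ combined with the Jacobi identity $\det(I_d + \eta A) = 1 + \eta \mathrm{tr}(A) + O(\eta^2)$ gives
\[
p_X(\mathbf{T}(y))\,\det(\nabla \mathbf{T}(y)) \;=\; p(y) + \eta\, \nabla \!\cdot\! \bigl( p\,\nabla f\bigr)(y) + O(\eta^2).
\]
Second, Assumption~\ref{assump:noise-moments}(i) (which kills the odd-order noise moments) together with Assumption~\ref{assump:density-smoothness} up to $k=4$ yields the convolution expansion $q(y) = p(y) + \eta \Delta p(y) + O(\eta^2)$, so that subtraction gives
\[
p_X(\mathbf{T}(y))\,\det(\nabla \mathbf{T}(y)) - q(y) \;=\; \eta\, \nabla \!\cdot\! \bigl( p\,\nabla f - \nabla p \bigr)(y) + O(\eta^2).
\]
Substituting $f = 2 f_* = 2 \log q$ and using $p = q + O(\eta)$, $\nabla p = \nabla q + O(\eta)$ inside the divergence, the bracket reduces to $(2 \nabla q - \nabla q) + O(\eta) = \nabla q + O(\eta)$, and taking the divergence then yields
\[
q(y) - p_X(\mathbf{T}^\star(y))\,\det(\nabla \mathbf{T}^\star(y)) \;=\; -\eta\, \Delta q(y) + O(\eta^2).
\]

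From here the lower bound on $d_{\mathrm{MA}}$ follows by the pointwise inequality $|{-}\eta \Delta q(y) + R(y)| \ge \eta \Delta q(y) - |R(y)|$ with $|R(y)| \le C \eta^2$ uniformly on $\Omega$, followed by taking the supremum and reinstating $\eta = \sigma^2/2$, absorbing $\eta^2 = \sigma^4/4$ into the constant $C_\Omega$. The main obstacle I anticipate is making every $O(\eta^2)$ remainder \emph{uniform} on the compact set $\Omega$. This is precisely where Assumption~\ref{assump:bdd-hessian}(i) is used to bound $\nabla f_*$ and $\nabla^2 f_*$ uniformly, Assumption~\ref{assump:small-noise}(i) is used to keep $\mathbf{T}^\star$ a diffeomorphism with positive Jacobian determinant in the relevant small-$\eta$ regime, and Assumption~\ref{assump:density-smoothness} with $k = 4$ is used to control the third- and fourth-order derivatives generated by the Taylor remainders of $p_X$ and of the noise convolution. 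Since exactly these ingredients already appeared in the proof of Theorem~\ref{thm:distributional-matching-first-order}, I expect the corollary to follow from essentially the same technical toolkit, differing only through the bookkeeping change $q \nabla (2 f_*) - \nabla q = \nabla q$ that prevents the leading order from cancelling.
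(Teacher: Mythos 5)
Your proposal follows essentially the same route as the paper: run the Taylor expansion of $p_X(\mathbf{T}(y))\det(\nabla\mathbf{T}(y))$ and the convolution expansion of $q$ from the proof of Theorem~\ref{thm:distributional-matching-first-order}, identify the surviving order-$\eta$ residual, replace $p$-quantities by $q$-quantities up to $O(\eta)$, substitute $f = 2\log q$, and finish with a reverse-triangle-inequality argument. Your explicit computation that the residual equals $-\eta\,\Delta q(y) + O(\eta^2)$ (equivalently, $\Delta q - \langle\nabla q, \nabla(2\log q)\rangle - q\,\Delta(2\log q) = -\Delta q$ via the log-density identities) correctly fills in the step the paper leaves implicit, and your tracking of Assumptions~\ref{assump:bdd-hessian}(i), \ref{assump:small-noise}(i), and \ref{assump:density-smoothness} for uniformity of the remainders matches the paper's usage.

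One small stylistic caution: the phrase ``using $p = q + O(\eta)$, $\nabla p = \nabla q + O(\eta)$ \emph{inside the divergence}'' is loose, because differentiating an $O(\eta)$ remainder does not automatically preserve the order. The clean way — and effectively what the paper does — is to first expand $\nabla\!\cdot\!\bigl(p\,\nabla f - \nabla p\bigr) = \langle\nabla p, \nabla f\rangle + p\,\Delta f - \Delta p$ and only then replace $p, \nabla p, \Delta p$ individually by $q, \nabla q, \Delta q$ using Lemma~\ref{lem:convolution-expansion}(i), which gives uniform $O(\eta)$ control on each term without ever differentiating a remainder. Your final computation and conclusion are nonetheless correct.
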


\subsection{Second-Order Denoising Equation}

The derivation for the second-order denoising equation is more involved. The following theorem shows the optimality of the second-order denoiser $\mathbf{T}_2^{f, g}$ defined in \eqref{eq:denoiser-second-order}, and derives the corresponding differential equations that $f, g$ need to satisfy.

\begin{theorem}[Distributional Matching: Optimal Second-Order Denoiser]
	\label{thm:distributional-matching-second-order}
	Under Assumptions~\ref{assump:small-noise}(i) \& (ii) and \ref{assump:bdd-hessian} (i) \& (ii), we consider the second-order denoiser $\mathbf{T}_2^{f, g}$.

	Let $p = P_X$ satisfy Assumption~\ref{assump:density-smoothness} for up to $k=6$, and the noise $Z$ has bounded 6-th moments as in Assumption~\ref{assump:noise-moments}(i) \& (ii). Then with some universal constant $C_{\Omega} > 0$,
	\begin{align*}
	d_{\mathrm{MA}}( \mathbf{T}_2^{f, g} \sharp P_Y, P_X ) = C_{\Omega} \cdot \sigma^6 \;,
	\end{align*}
	if $f, g$ satisfy the differential equations
	\begin{align}
	q \nabla f &=  \nabla q \;, \nonumber  \\
	q \nabla g &= \nabla^2 f \nabla q + \Delta q \nabla f - \nabla \Delta q \;. \label{eqn:second-order-DE}
	\end{align}
\end{theorem}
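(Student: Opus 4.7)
The plan is to Taylor-expand the pointwise residual
\[
R(y) := q(y) - p(\mathbf{T}_2^{f,g}(y))\det(\nabla\mathbf{T}_2^{f,g}(y))
\]
in $\eta = \sigma^2/2$ through order $\eta^2$ and verify that the two stated differential equations are precisely the conditions that make the $\eta$ and $\eta^2$ coefficients of $R$ vanish pointwise on $\Omega$. The argument parallels Theorem~\ref{thm:distributional-matching-first-order} one order higher, which explains why Assumption~\ref{assump:density-smoothness} is required up to $k=6$ and why Assumption~\ref{assump:noise-moments}(ii) must supply the Wick-type fourth moments.

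First I expand $q(y) = \mathbb{E}[p(y-\sigma Z)]$ using a sixth-order Taylor expansion of $p$. Symmetry of $Z$ kills odd-order terms; $\mathbb{E}[Z_iZ_j] = \delta_{ij}$ produces the Laplacian, and the Wick identity $\mathbb{E}[Z_iZ_jZ_kZ_l] = \delta_{ij}\delta_{kl} + \delta_{ik}\delta_{jl} + \delta_{il}\delta_{jk}$ produces the bi-Laplacian, yielding $q = p + \eta\Delta p + \tfrac{\eta^2}{2}\Delta^2 p + O(\eta^3)$ with remainder controlled uniformly by $\|\nabla^6 p\|_\infty$ and $\mathbb{E}|Z_i|^6$. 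Inverting gives $p = q - \eta\Delta q + \tfrac{\eta^2}{2}\Delta^2 q + O(\eta^3)$. Next, writing $w := \eta\nabla f + \tfrac{\eta^2}{2}\nabla g$, I Taylor-expand $p(y+w)$ through $|w|^3$ and expand the Jacobian determinant via $\det(I+A) = 1 + \mathrm{tr}(A) + \tfrac12[(\mathrm{tr}\,A)^2 - \mathrm{tr}(A^2)] + O(\|A\|^3)$ with $A = \eta\nabla^2 f + \tfrac{\eta^2}{2}\nabla^2 g$. Multiplying yields
\[
p(\mathbf{T}_2^{f,g})\det(\nabla\mathbf{T}_2^{f,g}) = p + \eta\,\nabla\!\cdot\!(p\nabla f) + \eta^2\,\Phi_2(p,f,g) + O(\eta^3),
\]
where $\Phi_2 = \tfrac12\nabla\!\cdot\!(p\nabla g) + \tfrac12(\nabla f)^\top\nabla^2 p(\nabla f) + \tfrac{p}{2}[(\Delta f)^2 - \|\nabla^2 f\|_F^2] + (\nabla p\cdot\nabla f)\Delta f$. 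The uniform $O(\eta^3)$ remainder on the compact $\Omega$ uses Assumption~\ref{assump:bdd-hessian}(i)-(ii) together with Assumption~\ref{assump:density-smoothness}.

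Subtracting, $R = \eta[\Delta p - \nabla\!\cdot\!(p\nabla f)] + \eta^2[\tfrac12\Delta^2 p - \Phi_2] + O(\eta^3)$. The first-order DE $q\nabla f = \nabla q$ does not pointwise annihilate the $\eta$-bracket because $p\neq q$; instead, using the algebraic identity $\nabla p - (p/q)\nabla q = q\nabla(p/q)$ together with $p/q = 1 - \eta\Delta q/q + O(\eta^2)$, one finds $\Delta p - \nabla\!\cdot\!(p\nabla f) = -\eta\nabla\!\cdot\![q\nabla(\Delta q/q)] + O(\eta^2)$, so the first bracket contributes an $O(\eta^2)$ piece to $R$. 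Combining this with $\tfrac12\Delta^2 p - \Phi_2$---in which $p,\nabla p,\nabla^2 p$ may be freely replaced by $q,\nabla q,\nabla^2 q$ modulo $O(\eta^3)$---reduces the $\eta^2$ coefficient of $R$ to $\tfrac12\nabla\!\cdot\![q\nabla g - \nabla^2 f\,\nabla q - \Delta q\,\nabla f + \nabla\Delta q]$, which vanishes precisely under the stated second-order DE; hence $|R(y)| \le C_\Omega\sigma^6$ uniformly on $\Omega$. The main obstacle is the algebraic bookkeeping in this last step: the Hessian quadratic form, the sub-leading determinant correction $\tfrac{p}{2}[(\Delta f)^2 - \|\nabla^2 f\|_F^2]$, the cross term $(\nabla p\cdot\nabla f)\Delta f$, the bi-Laplacian $\tfrac12\Delta^2 q$, and the residue $-\nabla\!\cdot\![q\nabla(\Delta q/q)]$ from the first bracket must all recombine into a single divergence. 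The cleanest route is to substitute $\nabla f = \nabla q/q$ and $\nabla^2 f = \nabla^2 q/q - (\nabla q)(\nabla q)^\top/q^2$, then use the derived identities $\Delta q = q\Delta f + q\|\nabla f\|^2$ and $\nabla^2 q = q\nabla^2 f + q\nabla f(\nabla f)^\top$; the expression then collapses to a polynomial identity in $\nabla f$, $\nabla^2 f$, $\nabla\Delta f$, and $q$, whose terms cancel in pairs once expanded by the product and chain rules.
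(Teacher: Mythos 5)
Your proposal is correct and follows essentially the same route as the paper's proof: Taylor-expand $q$, $p(\mathbf{T}_2^{f,g})$, and the Jacobian determinant in $\eta$, observe that the order-$\eta$ residual $\mathcal{E}_1[p,f]$ contributes an $O(\eta^2)$ term because $p\neq q$, fold that contribution into the order-$\eta^2$ coefficient, and verify via the $f_*=\log q$ identities that the combined coefficient is the divergence of $q\nabla g - \nabla^2 f\,\nabla q - \Delta q\,\nabla f + \nabla\Delta q$, which vanishes under \eqref{eqn:second-order-DE}. The only cosmetic differences are your compact packaging of the first-order residual as $\nabla\!\cdot\!\bigl(q\nabla(p/q)\bigr) = -\eta\nabla\!\cdot\!\bigl[q\nabla(\Delta q/q)\bigr]+O(\eta^2)$ in place of the paper's explicit $-\eta\bigl[\Delta^2 p - \langle\nabla\Delta p,\nabla f\rangle - \Delta p\,\Delta f\bigr]$, and an inconsequential sign flip in your final divergence expression (it should carry a minus sign, but since the bracketed vector is zero under the DE this does not affect the conclusion).
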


Again, this theorem requires stronger conditions than Theorem~\ref{thm:distributional-matching-first-order}, in terms of smoothness of the signal distribution $P_X$ (up to 6-th order), and boundedness of 6-th moments and a Gaussian-like 4-th moment assumption, stated in Assumption~\ref{assump:noise-moments}(ii). If instead one wishes to consider general noise $Z$ with i.i.d. coordinates that have general 4-th moments $\E[Z_i^4] = \kappa \in [1, \infty)$, then the second-order denoising equation will be modified by adding a term of the form $(3 - \kappa) \partial_i^3 q$ for each coordinate $i$ in \eqref{eqn:second-order-DE}. We choose to state a less general version for simplicity of presentation.

\section{Numerics}
\label{sec:numerics}

In this section, we implement the score matching techniques \cite{hyvarinen2005estimation} as in \eqref{eqn:fisher-divergence-minimization} to learn the score function from samples of the noisy distribution $P_Y$, and then use it to obtain the estimated first-order and second-order denoisers as in \eqref{eqn:est-first-order-denoiser} and \eqref{eqn:est-second-order-denoiser}. We compare the denoising quality of these two denoisers on synthetic 2D datasets, including four types of signal distributions $P_X$: (i) correlated Gaussian distribution, (ii) mixture of Gaussians, (iii) uniform distribution on a square, and (iv) infinite mixture of Gaussians centered on a circle. 

In each experiment, we parametrize the score function as a neural network $\xi: \mathbb{R}^2 \rightarrow \mathbb{R}^2$, with three hidden layers each of width 64. For each hidden layer, we use a residual connection and tanh activation (for smoothness), namely $z \rightarrow z + \tanh(W z + b), z \in \mathbb{R}^{64}$ with $W \in \mathbb{R}^{64 \times 64}, b \in \mathbb{R}^{64}$ being the weights and biases.  The noise $Z$ is taken as standard Gaussian noise. We set the noise parameter $\eta = \sigma^2/2 = 0.5$. We implement the score matching objective \eqref{eqn:fisher-divergence-minimization} using PyTorch \cite{paszke2019pytorch}, and optimize the parameters of the neural network using the Adam optimizer with a learning rate of $0.001$ for $10$ epochs on a dataset of $n=6400$ noisy samples $Y$. After obtaining the estimated score function $\widehat{\xi}$, we plug in to obtain the estimated first-order and second-order denoisers as in \eqref{eqn:est-first-order-denoiser} and \eqref{eqn:est-second-order-denoiser}. We generate the $m=1000$ test samples from the noisy distribution $P_Y$, and their denoised versions using the estimated first-order and second-order denoisers in Figures~\ref{fig:simu-2d-gauss-scorenn} and \ref{fig:simu-2d-square-torus-scorenn}, and contrast with the ground-truth $m=1000$ test samples from $P_X$. For each experiment, we compute the average Wasserstein distance between the $m=1000$ denoised samples and the ground-truth samples and report it in each subplot.

\begin{figure}[!ht]
	\centering
	\includegraphics[width=0.88\textwidth]{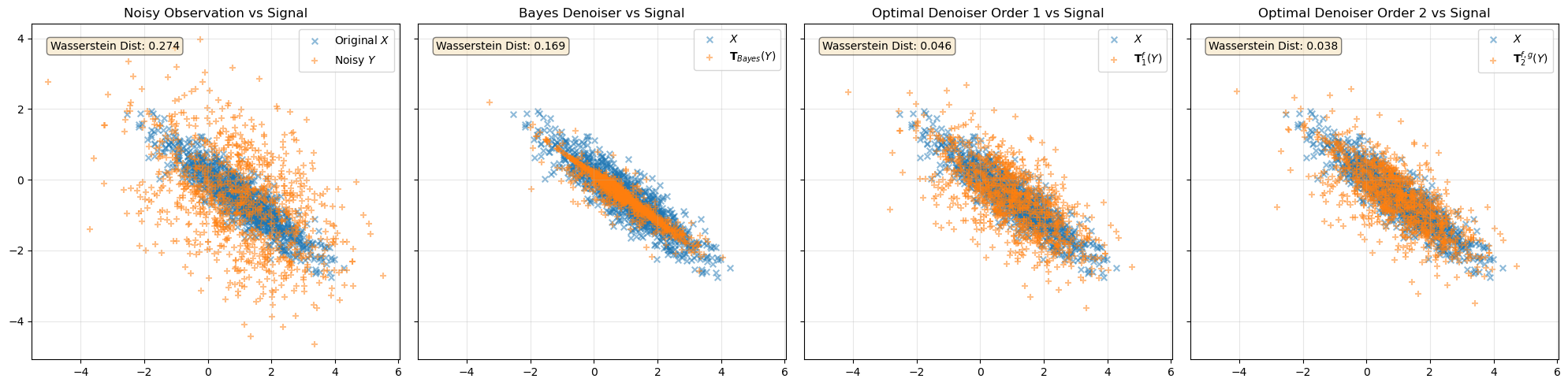}
	\includegraphics[width=0.88\textwidth]{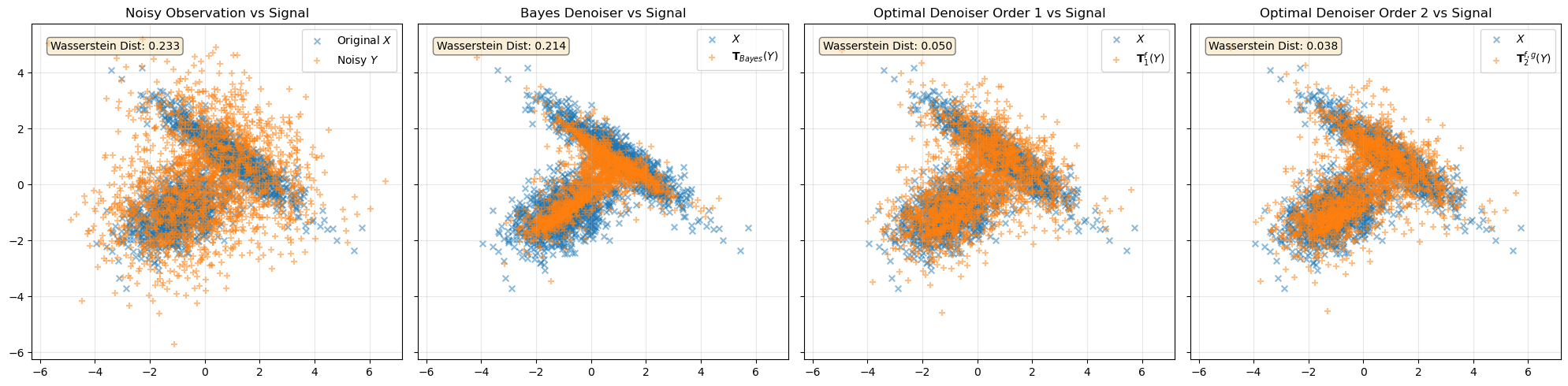}
	\caption{\scriptsize Comparison of denoising quality using score matching. We compare four denoisers: no-shrinkage $\mathbf{T}^{\mathrm{id}}$, Bayes-optimal shrinkage $\mathbf{T}^\star$, our proposed first-order denoiser $\mathbf{T}_1^{f_*}$, and second-order denoiser $\mathbf{T}_2^{f_*, g_*}$. The Wasserstein distance $W(\mathbf{T} \sharp P_Y, P_X)$ is reported in each subplot. Top row: correlated Gaussian distribution for $P_X$, with $W(\mathbf{T}^{\mathrm{id}} \sharp P_Y, P_X) = 0.274$, $W(\mathbf{T}^\star \sharp P_Y, P_X) = 0.169$, $W(\mathbf{T}_1^{f_*} \sharp P_Y, P_X) = 0.046$, $W(\mathbf{T}_2^{f_*, g_*} \sharp P_Y, P_X) = 0.038$; Bottom row: mixture of two correlated Gaussians for $P_X$, with $W(\mathbf{T}^{\mathrm{id}} \sharp P_Y, P_X) = 0.233$, $W(\mathbf{T}^\star \sharp P_Y, P_X) = 0.214$, $W(\mathbf{T}_1^{f_*} \sharp P_Y, P_X) = 0.050$, $W(\mathbf{T}_2^{f_*, g_*} \sharp P_Y, P_X) = 0.038$. Visually, both first-order and second-order denoisers produce significantly better distributional matching than the Bayes-optimal denoiser. Bayes-optimal denoisers introduce overly aggressive shrinkage, whereas our proposed optimal denoisers introduce a more balanced shrinkage. Quantitatively, the Bayes-optimal denoiser only slightly improves over no shrinkage, while both first-order and second-order denoisers achieve an order of magnitude improvement in Wasserstein distance. Furthermore, the second-order denoiser outperforms the first-order denoiser by a margin, demonstrating the potential benefit of higher-order denoising.}
	\label{fig:simu-2d-gauss-scorenn}
\end{figure}

Figure~\ref{fig:simu-2d-gauss-scorenn} presents two experiments.
On the top row, we specify the signal distribution $P_X \sim \cN(\mu_1, \Sigma_1)$ where $\mu_1 = \begin{bmatrix}1.0 \\ -0.5 \end{bmatrix}$ and $\Sigma_1 = \begin{bmatrix}1.5 & -1.0 \\ -1.0 & 0.8 \end{bmatrix}$, the same setting as Figure~\ref{fig:intro-2d} where we denoise using the analytic score function $\nabla \log q (y) = -\Sigma_1^{-1}(y - \mu_1)$. Here, we estimate using score matching. This first example demonstrates that score matching aligns with the analytic score function in terms of denoising quality, with similar qualitative and quantitative performance.
 On the bottom row, we specify the signal distribution $P_X$ as a mixture of Gaussians with two components, with an additional component $\cN(\mu_2, \Sigma_2)$ where $\mu_2 = \begin{bmatrix}-1.0 \\ -1.0 \end{bmatrix}$ and $\Sigma_2 = \begin{bmatrix}0.8 & 0.3 \\ 0.3 & 0.5 \end{bmatrix}$ besides the component $\cN(\mu_1, \Sigma_1)$, each with equal weight. 
 
 In both experiments, we observe the following phenomenon. First, the Bayes-optimal denoiser introduces an overly aggressive shrinkage, thus shrinking the distribution $\mathbf{T}^\star \sharp P_Y$ inside the mass of $P_X$ and producing poor distributional matching. In the mixture of Gaussian experiment, the $W(\mathbf{T}^\star \sharp P_Y, P_X) = 0.214$, not a good improvement compared to no shrinkage $W(P_Y, P_X) = 0.233$. In contrast, both the first-order and second-order denoisers produce significantly better distributional matching, with $W(\mathbf{T}_1^{f_*} \sharp P_Y, P_X) = 0.050$ and $W(\mathbf{T}_2^{f_*, g_*} \sharp P_Y, P_X) = 0.038$, an order of magnitude improvement. Second, although visually the first-order and second-order denoisers look similar, the second-order denoiser achieves a better Wasserstein distance in both experiments by a margin, demonstrating the benefit of higher-order denoising. Both of our proposed denoisers significantly outperform the Bayes-optimal denoiser in terms of distributional matching, both visually and quantitatively. This is compatible with our theoretical findings in Theorems~\ref{thm:first-order}-\ref{thm:distributional-matching-second-order}.

\begin{figure}[!ht]
	\centering
	\includegraphics[width=0.88\textwidth]{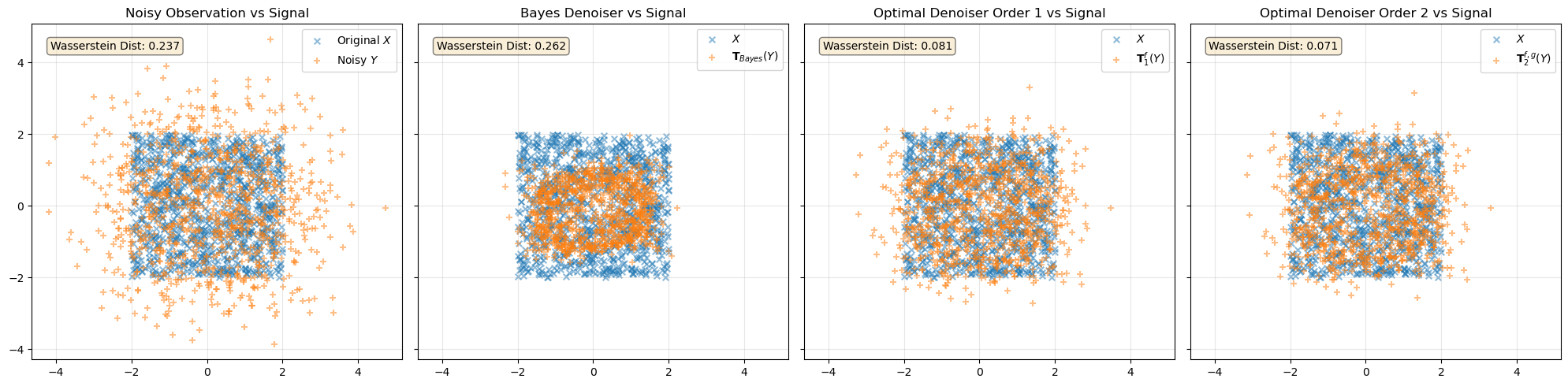}
	\includegraphics[width=0.88\textwidth]{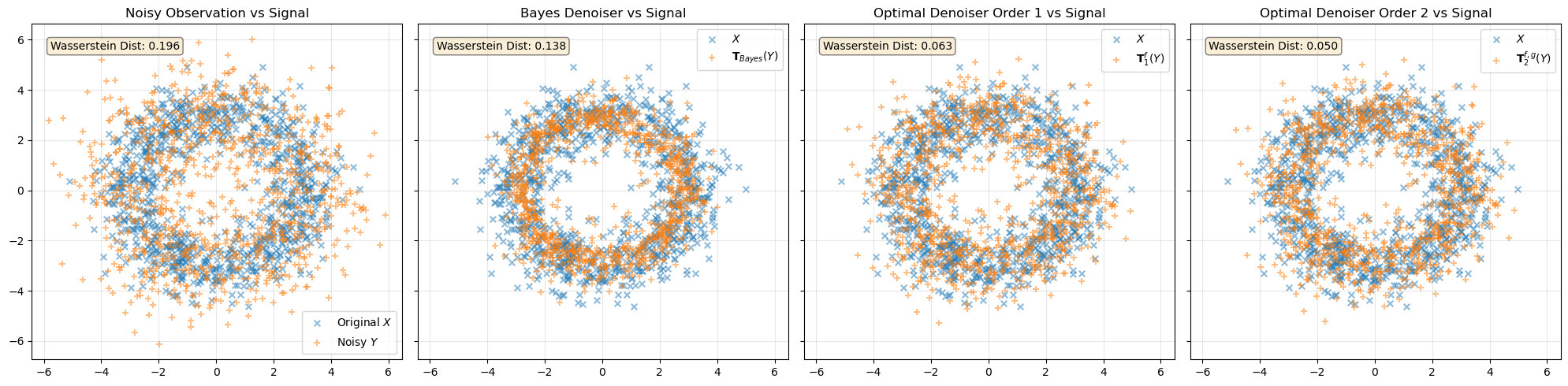}
	\caption{\scriptsize Comparison of denoising quality using score matching. We compare four denoisers as in Figure~\ref{fig:simu-2d-gauss-scorenn}. Top row: $P_X$ on a square, namely a Uniform distribution $[-2, 2]^2$, with $W(\mathbf{T}^{\mathrm{id}} \sharp P_Y, P_X) = 0.237$, $W(\mathbf{T}^\star \sharp P_Y, P_X) = 0.262$, $W(\mathbf{T}_1^{f_*} \sharp P_Y, P_X) = 0.081$, $W(\mathbf{T}_2^{f_*, g_*} \sharp P_Y, P_X) = 0.071$; Bottom row: $P_X$ on a torus, namely an infinite mixture of Gaussians with mean uniformly supported on a circle $r=3$ and variance $0.5$, with $W(\mathbf{T}^{\mathrm{id}} \sharp P_Y, P_X) = 0.196$, $W(\mathbf{T}^\star \sharp P_Y, P_X) = 0.138$, $W(\mathbf{T}_1^{f_*} \sharp P_Y, P_X) = 0.063$, $W(\mathbf{T}_2^{f_*, g_*} \sharp P_Y, P_X) = 0.050$. }
	\label{fig:simu-2d-square-torus-scorenn}
\end{figure}

Figure~\ref{fig:simu-2d-square-torus-scorenn} shows two more complex distributions for $P_X$. On the top row, we define $P_X$ as a uniform distribution over the square $[-2, 2]^2$. On the bottom row, $P_X$ is an infinite mixture of Gaussians with means uniformly supported on a circle of radius $r=3$ and variance $0.5$. The over-shrinkage effect is evident in the Bayes-optimal denoiser in these two experiments, leading to poor reconstruction of the signal distribution. In both the square and torus experiments, the Bayes-optimal denoiser misses some non-trivial support of $P_X$, as indicated by the large Wasserstein distances. For the square experiment, $W(\mathbf{T}^\star \sharp P_Y, P_X) = 0.262$ performs worse than no-shrinkage $W(P_Y, P_X) = 0.237$. For the torus experiment, $W(\mathbf{T}^\star \sharp P_Y, P_X) = 0.138$ only improves modestly over no-shrinkage $W(P_Y, P_X) = 0.196$. In these two experiments, the first- and second-order denoisers again outperform the Bayes-optimal denoiser, both visually and quantitatively. We note that, with these complex distributions, the performance difference between first-order and second-order denoisers is less pronounced than in earlier experiments. For the square experiment, the improvement is from $W(\mathbf{T}_1^{f_*} \sharp P_Y, P_X) = 0.081$ to $W(\mathbf{T}_2^{f_*, g_*} \sharp P_Y, P_X) = 0.071$. Similarly, for the torus experiment, the improvement goes from $W(\mathbf{T}_1^{f_*} \sharp P_Y, P_X) = 0.063$ to $W(\mathbf{T}_2^{f_*, g_*} \sharp P_Y, P_X) = 0.050$. This could be due to (i) the limited expressiveness of the neural network used to estimate the score function, which may not accurately capture higher-order derivatives of the score function needed for second-order denoising, and (ii) the numerical instability of higher-order derivatives of the score function. Nonetheless, both first- and second-order denoisers significantly outperform the Bayes-optimal denoiser in distributional matching, consistent with our theoretical results.

\section*{Acknowledgements}
The author acknowledges the support from the NSF Career Award (DMS-2042473) and the Wallman Society of Fellows at the University of Chicago. The author also thanks Nabarun Deb, Shuo-Chieh Huang, Nikos Ignatiadis, Ben Recht, and Bodhi Sen for comments on an earlier draft. This work is part I of a two-part series on distributional shrinkage; part II can be found in \cite{liang2025distributionalshrinkageiioptimal}.

\ifbiblatex
  \printbibliography
\fi

\appendix

\section{Proofs of Theorems}
\label{sec:proofs-of-theorems}
\subsection{Proofs for Section~\ref{sec:moment-matching}}

\begin{proof}[Proof of Theorem~\ref{thm:first-order}]
	Define $\eta$ such that $\eta := \sigma^2/2$ to streamline the calculations.
	By definition of the push-forward map $\mathbf{T}_1^{f}$, we have
	\begin{align}
	\E_{Y \sim P_Y} [ m(\mathbf{T}_1^{f}(Y)) ] - \E_{X \sim P_X} [ m(X) ] &= \int_{\mathbb{R}^d} m(y+\eta \nabla f(y)) q(y) \dd{y} - \int_{\mathbb{R}^d} m(x) p(x) \dd{x} \;. \label{eqn:moment-matching-first-order}
	\end{align}
	Recall Taylor expansion of $m(y+\eta \nabla f(y))$ along the line segment interpolating $y$ and $y+\eta \nabla f(y)$ with mean-value form of the remainder, we know that, there exists some $\delta = \delta(y, \eta, m, f) \in [0,1]$, such that
	\begin{align*}
	m(y+\eta \nabla f(y)) &= m(y) + \eta \langle \nabla m(y), \nabla f(y) \rangle + \frac{\eta^2}{2} \langle \nabla^2 m(y+\delta \eta \nabla f(y)), \nabla f(y) \otimes \nabla f(y) \rangle \;.
	\end{align*}
	Plugging the above expansion into the integral,
	\begin{align}
	Eqn.~\eqref{eqn:moment-matching-first-order} &= \int_{\mathbb{R}^d} m(y) q(y) \dd{y} -  	\int_{\mathbb{R}^d} m(y) p(y) \dd{y} \label{eqn:moment-difference-pq} \\
	&\quad + \eta \int_{\mathbb{R}^d} \langle \nabla m(y), \nabla f(y) \rangle q(y) \dd{y} \\
	&\quad  + \frac{\eta^2}{2} \int_{\mathbb{R}^d} \langle \nabla^2 m(y+\delta \eta \nabla f(y)), \nabla f(y) \otimes \nabla f(y) \rangle q(y) \dd{y}  \;. \label{eqn:moment-difference-eta2}
	\end{align}
	We analyze each term separately. First, we analyze the term in \eqref{eqn:moment-difference-eta2}. By the Cauchy-Schwarz inequality,
	\begin{align*}
	\int_{\mathbb{R}^d} |\partial_i f(y) \partial_j f(y)| q(y) \dd{y} \leq \left( \int_{\mathbb{R}^d} |\partial_i f(y)|^2 q(y) \dd{y} \right)^{1/2} \left( \int_{\mathbb{R}^d} |\partial_j f(y)|^2 q(y) \dd{y} \right)^{1/2}
	\end{align*}
	if (i) $\int_{\mathbb{R}^d}  \| \nabla f(y)\|^2 q(y) \dd{y} < \infty$ and (ii) $\sup_{y \in \mathbb{R}^d}~\| \nabla^2 m (y) \|_{\infty} < \infty$, we know the term in \eqref{eqn:moment-difference-eta2} is bounded by $O(\eta^2)$. The constant here is independent of $y$, only depending on $m$.

	Moving to the first term on the right-hand side of \eqref{eqn:moment-difference-pq},
	by Lemma~\ref{lem:convolution-expansion}(i), we have
	\begin{align*}
	q(y) = p(y) + \eta \Delta p(y) + O(\eta^2) \;,\\
	\partial_i q(y) = \partial_i p(y) + O(\eta) \;.
	\end{align*}
	Here, the big-O terms are uniform in $y$.
	
	Now 
	\begin{align*}
	\int_{\mathbb{R}^d} m(y) q(y) \dd{y} - \int_{\mathbb{R}^d} m(y) p(y) \dd{y} &= \eta \int_{\mathbb{R}^d} m(y) \Delta p(y) \dd{y} + O(\eta^2) \;, ~\text{by (i)}\\
	& = - \eta \int_{\mathbb{R}^d} \langle \nabla m(y), \nabla p(y) \rangle \dd{y} + O(\eta^2) \;, ~\text{by (ii)} \\
	&= - \eta \int_{\mathbb{R}^d} \langle \nabla m(y), \nabla q(y) \rangle \dd{y} + O(\eta^2) \;, ~\text{by (i)} 
	\end{align*}
	if (i) $\int_{\mathbb{R}^d} |m(y)| \dd y < \infty$, $\int_{\mathbb{R}^d} |\partial_i m(y)| \dd y < \infty$, and (ii) $m \nabla p$ vanishes as $y \rightarrow \infty$ because $m$ vanishes and $\nabla p$ is bounded due to Assumption~\ref{assump:density-smoothness}. To be more specific, (ii) is a direct application of integration by parts
	\begin{align*}
	\nabla \cdot (m \nabla p) = \langle \nabla m, \nabla p \rangle + m \Delta p \;,
	\end{align*}
	and thus $\int_{\mathbb{R}^d} m \Delta p \dd y = - \int_{\mathbb{R}^d} \langle \nabla m, \nabla p \rangle \dd y$.

	Putting things together, we have
	\begin{align*}
	Eqn.~\eqref{eqn:moment-matching-first-order} &= \eta \left[ - \int_{\mathbb{R}^d} \langle \nabla m(y), \nabla q(y) \rangle \dd{y}  + \int_{\mathbb{R}^d} \langle \nabla m(y), \nabla f(y) \rangle q(y) \dd{y}   \right] + O(\eta^2)
	\end{align*}
	and thus
	\begin{align*}
	 \E_{Y \sim P_Y} [ m(\mathbf{T}_1^{f}(Y)) ] - \E_{X \sim P_X} [ m(X) ] &= \eta \int_{\mathbb{R}^d} \langle \nabla m(y), q(y) \nabla f(y) - \nabla q(y) \rangle \dd{y} + O(\eta^2) \;.
	\end{align*}
	Choosing $f = f_* = \log q$ such that $q(y) \nabla f_*(y) = \nabla q(y)$, we finish the proof.
\end{proof}

\begin{proof}[Proof of Corollary~\ref{cor:bayes-optimal}]
	Recall the last equation in the proof of Theorem~\ref{thm:first-order}, 
	\begin{align*}
	 \E_{Y \sim P_Y} [ m(\mathbf{T}_1^{f}(Y)) ] - \E_{X \sim P_X} [ m(X) ] &= \eta \int_{\mathbb{R}^d} \langle \nabla m(y), q(y) \nabla f(y) - \nabla q(y) \rangle \dd{y} + O(\eta^2) \;.
	\end{align*}
	Choosing $f = 2 \log q$, we have
	$$\mathbf{T}_1^{f}(y) = y + \sigma^2 \nabla \log q(y)\;,$$
	which is the Bayes-optimal denoiser. Substituting this into the above equation yields the desired result.
\end{proof}

\begin{proof}[Proof of Theorem~\ref{thm:second-order}]
	Let $\eta := \sigma^2/2$ to streamline the calculations. For simplicity, we use $f = f_*$ and $g = g_*$, defined in \eqref{eq:denoiser-second-order}, in the following proof whenever there is no ambiguity.

	By definition of the push-forward map $\mathbf{T}_2^{f,g}$, we have
	\begin{align*}
	\E_{Y \sim P_Y} [ m(\mathbf{T}_2^{f,g}(Y)) ] - \E_{X \sim P_X} [ m(X) ] &= \int_{\mathbb{R}^d} m\left(y+\eta \nabla f(y) + \frac{\eta^2}{2} \nabla g(y)\right) q(y) \dd{y} - \int_{\mathbb{R}^d} m(x) p(x) \dd{x} \;. 
	\end{align*}
	Recall the Taylor expansion of $m\left(y+\eta \nabla h\right)$ at $y$ up to the third-order, with $h := f + \frac{\eta}{2} g$, we know
	\begin{align*}
	m\left(y+\eta \nabla h(y)\right) &= m(y) + \eta \langle \nabla m(y), \nabla h(y) \rangle + \frac{\eta^2}{2} \langle \nabla^2 m(y), \nabla h(y) \otimes \nabla h(y) \rangle + O(\eta^3) \;.
	\end{align*}
	The constant in the term $O(\eta^3)$ depends on $\sup_{y \in \mathbb{R}^d} \| \nabla^3 m(y) \|_{\infty}$ and $ \| \nabla h(y) \|^3 $, with the first bounded (Assumption~\ref{assump:density-smoothness}) and second integrable (Assumption~\ref{assump:integrability}).

	Rearrange the terms according to the order of $\eta$, we have
	\begin{align*}
		m\left(y+\eta \nabla f(y) + \frac{\eta^2}{2} \nabla g(y)\right) &= m(y) + \eta \langle \nabla m(y), \nabla f(y) \rangle \\
		&\quad + \frac{\eta^2}{2} \left[ \langle \nabla m(y), \nabla g(y) \rangle + \langle \nabla^2 m(y), \nabla f(y) \otimes \nabla f(y) \rangle \right] + O(\eta^3) \;.
	\end{align*}

	By Lemma~\ref{lem:convolution-expansion}(ii), we have
	\begin{align*}
	q(y) = p(y) + \eta \Delta p(y) + \frac{\eta^2}{2} \Delta^2 p(y) + O(\eta^3) \;,
	\end{align*}
	as a result,
	\begin{align*}
	\int_{\mathbb{R}^d} m(y) q(y) \dd{y} - \int_{\mathbb{R}^d} m(y) p(y) \dd{y} &= \eta \int_{\mathbb{R}^d} m(y) \Delta p(y) \dd{y} + \frac{\eta^2}{2} \int_{\mathbb{R}^d} m(y) \Delta^2 p(y) \dd{y} + O(\eta^3) \;,\\
	&= - \eta \int_{\mathbb{R}^d} \langle \nabla m(y), \nabla p(y) \rangle \dd{y} \\
	&\quad - \frac{\eta^2}{2} \int_{\mathbb{R}^d} \langle \nabla m(y), \nabla \Delta p(y) \rangle \dd{y} + O(\eta^3) \;,
	\end{align*}
	where the last line follows from integration by parts
	\begin{align*}
	\nabla \cdot (m \nabla p) = \langle \nabla m, \nabla p \rangle + m \Delta p \;, \\
	\nabla \cdot (m \nabla \Delta p) = \langle \nabla m, \nabla \Delta p \rangle + m \Delta^2 p \;,
	\end{align*}
	and the boundary condition as $m \nabla p, m \nabla \Delta p$ vanishes on the boundary as $y\rightarrow \infty$. Here we use the fact that $m$ vanishes and $\nabla p, \nabla \Delta p$ are bounded (Assumption~\ref{assump:density-smoothness}).
	Now invoke Lemma~\ref{lem:convolution-expansion}(ii) again, we have element-wise the following expansions
	\begin{align}
	\nabla p(y) &=  \nabla q(y) - \eta \nabla \Delta p(y) + O(\eta^2) \;, \label{eqn:q'_eta^2}\\
	\nabla \Delta p(y) &=   \nabla \Delta q(y) + O(\eta) \label{eqn:q'''_eta} \;.
	\end{align}
	Hence, 
	\begin{align*}
		&- \eta \int_{\mathbb{R}^d} \langle \nabla m(y), \nabla p(y) \rangle \dd{y} - \frac{\eta^2}{2} \int_{\mathbb{R}^d} \langle \nabla m(y), \nabla \Delta p(y) \rangle \dd{y} \\
		&= - \eta \int_{\mathbb{R}^d} \langle \nabla m(y), \nabla q(y) \rangle \dd{y} + \frac{\eta^2}{2} \int_{\mathbb{R}^d} \langle \nabla m(y), \nabla \Delta p(y) \rangle \dd{y} + O(\eta^3) \;, ~\text{by \eqref{eqn:q'_eta^2}}\\
	&= - \eta \int_{\mathbb{R}^d} \langle \nabla m(y), \nabla q(y) \rangle \dd{y} + \frac{\eta^2}{2} \int_{\mathbb{R}^d} \langle \nabla m(y), \nabla \Delta q(y) \rangle \dd{y} + O(\eta^3) \;, ~\text{by \eqref{eqn:q'''_eta}}
	\end{align*}
	where the two lines follows from $\int_{\mathbb{R}^d} |\partial_i m(y)| \dd y < \infty$, for all $i$.

	Observe that the following identity holds, by applying the chain-rule recursively,
	\begin{align}
	\nabla \cdot ( q \langle \nabla m, \nabla f \rangle  \nabla f ) &= \langle \nabla (q \langle \nabla m, \nabla f \rangle ), \nabla f \rangle + q \langle \nabla m, \nabla f \rangle \Delta f \nonumber \\
	& = \langle \nabla^2 m, \nabla f \otimes \nabla f \rangle q + \langle \nabla m, \nabla^2 f \nabla f \rangle q + \langle \nabla q, \langle \nabla m, \nabla f \rangle \nabla f \rangle + q \langle \nabla m, \nabla f \rangle \Delta f  \label{eqn:long-chain-rule}\;.
	\end{align}
	Note that $ q \langle \nabla m, \nabla f \rangle  \nabla f$ vanishes on the boundary as $y\rightarrow \infty$ due to the fact that $\partial_i m$ vanishes and $q \partial_i f \partial_j f$ is bounded (Assumption~\ref{assump:integrability}). Therefore, applying integration by parts and recalling \eqref{eqn:long-chain-rule}, we have
	\begin{align*}
	&\int_{\mathbb{R}^d} \langle \nabla^2 m(y), \nabla f(y) \otimes \nabla f(y) \rangle q(y) \dd{y} \\
	&= - \int_{\mathbb{R}^d} \Big[ \langle \nabla m(y), \nabla^2 f(y) \nabla f(y) \rangle q(y) +\langle \nabla m(y), \nabla f(y) \rangle \langle \nabla f(y), \nabla q(y) \rangle \\
	&\qquad\qquad\quad +  \langle \nabla m(y), \Delta f(y) \nabla f(y) \rangle q(y)  \Big] \dd{y}  \;.
	\end{align*}

	Collect all terms with $\frac{\eta^2}{2}$, we have
	\begin{align*}
	&\E_{Y \sim P_Y} [ m(\mathbf{T}_2^{f,g}(Y)) ] - \E_{X \sim P_X} [ m(X) ] \\
	&= \frac{\eta^2}{2} \int_{\mathbb{R}^d} \Big \langle \nabla m, q \nabla g - (\nabla^2 f \nabla f + \Delta f \nabla f) q - \langle \nabla f, \nabla q \rangle \nabla f + \nabla \Delta q \Big \rangle \dd y + O(\eta^3) \;.
	\end{align*}
	Using Lemma~\ref{lem:log-density-identities}, and recall that $f_* = \log q$, we have
	\begin{align*}
	(\nabla^2 f_* \nabla f_* + \Delta f_* \nabla f_*) + \langle \frac{\nabla q}{q}, \nabla f_* \rangle \nabla f_* - \frac{\nabla \Delta q}{q} = -\nabla (\Delta f_* + \frac{1}{2} \| \nabla f_* \|^2 ) \;.
	\end{align*}

	Choosing $f = f_*$ and $g = g_*$ defined in \eqref{eq:denoiser-second-order}, one can verify that
	\begin{align*}
	 \nabla g &=  (\nabla^2 f \nabla f + \Delta f \nabla f) + \langle \frac{\nabla q}{q}, \nabla f \rangle \nabla f - \frac{\nabla \Delta q}{q}   \;,
	\end{align*}
	and thus the $\E_{Y \sim P_Y} [ m(\mathbf{T}_2^{f,g}(Y)) ] - \E_{X \sim P_X} [ m(X) ] = O(\eta^3)$, finishing the proof.
\end{proof}

\subsection{Proofs for Section~\ref{sec:monge-ampere-optimality}}

\begin{proof}[Proof of Theorem~\ref{thm:distributional-matching-first-order}]
	Recall the scaling $\eta := \sigma^2/2$, and set $f = f^\star$ for simplicity of notation. By definition of $d_{\mathrm{MA}}$, and that $p = P_X$, $q = P_Y$, we have
	\begin{align*}
	d_{\mathrm{MA}}( \mathbf{T}_1^{f} \sharp P_Y, P_X ) &= \sup_{y \in \Omega} \big| q(y) - p(y + \eta \nabla f(y)) \det( I_d + \eta \nabla^2 f(y) ) \big| \;.
	\end{align*}

	Recall Lemma~\ref{lem:convolution-expansion}(i), 
	\begin{align*}
	q(y) = p(y) + \eta \Delta p(y) + O(\eta^2) \;,
	\end{align*}
	where $O(\eta^2)$ is uniform in $y$ under Assumption~\ref{assump:density-smoothness} for up to $k=4$.

	By Taylor expansion of $p(y + \eta \nabla f(y))$ up to second-order, we know 
	\begin{align*}
	p(y + \eta \nabla f(y)) = p(y) + \eta \langle \nabla p(y), \nabla f(y) \rangle + O(\eta^2) \;.
	\end{align*}
	Here the $O(\eta^2)$ term is uniform in $y$ under Assumption~\ref{assump:density-smoothness} for up to $k=2$ and Assumption~\ref{assump:bdd-hessian}(i).

	Moreover, by the expansion of the determinant, and Assumption~\ref{assump:bdd-hessian}(i), we have
	\begin{align*}
	\det( I_d + \eta \nabla^2 f(y) ) = 1 + \eta \mathrm{tr}(\nabla^2 f(y)) + O(\eta^2) = 1 + \eta \Delta f(y) + O(\eta^2) \;.
	\end{align*}
	Again, the $O(\eta^2)$ term is uniform in $y$ ensured by Assumption~\ref{assump:bdd-hessian}(i).

	Put all three terms together, and recall the uniform boundedness of $p, \nabla^2 f$, we have
	\begin{align*}
	d_{\mathrm{MA}}( \mathbf{T}_1^{f} \sharp P_Y, P_X ) 
	&\leq \sup_{y \in \Omega}  \eta  \big|   \Delta p(y) - \langle \nabla p(y), \nabla f(y) \rangle -  p(y) \Delta f(y) \big|  + O(\eta^2) \;. 
	\end{align*}
	where the $O(\eta^2)$ term is uniform in $y$. Recall that $f = f_*$ satisfies 
	$$
	 \Delta q - \langle \nabla q, \nabla f \rangle -  q \Delta f = 0 \;.
	$$
	By Lemma~\ref{lem:convolution-expansion}(i), we have
	\begin{align*}
		\Delta q = \Delta p + O(\eta) \;,\quad 
		\nabla q = \nabla p + O(\eta) \;,\quad
		q = p + O(\eta) \;.
	\end{align*}
	Together with the assumption that $\nabla f, \Delta f$ are uniformly bounded in $y$ (Assumption~\ref{assump:bdd-hessian}(i)), we conclude
	\begin{align*}
		\big|   \Delta p(y) - \langle \nabla p(y), \nabla f(y) \rangle -  p(y) \Delta f(y) \big| = O(\eta) \;,
	\end{align*}
	where the $O(\eta)$ term is uniform in $y$.
	
	Putting everything together, we have
	\begin{align*}
	d_{\mathrm{MA}}( \mathbf{T}_1^{f} \sharp P_Y, P_X ) = O(\eta^2) = O(\sigma^4) \;.
	\end{align*}
\end{proof}

\begin{proof}[Proof of Corollary~\ref{cor:distributional-matching-bayes-optimal}]
As in the proof of Theorem~\ref{thm:distributional-matching-first-order}, we have
\begin{align*}
d_{\mathrm{MA}}( \mathbf{T}_1^{f} \sharp P_Y, P_X ) 
	&\geq \sup_{y \in \Omega}    \big|  \eta ( \Delta p(y) - \langle \nabla p(y), \nabla f(y) \rangle -  p(y) \Delta f(y) )   \big| -  O(\eta^2)  \;, \\
	& \geq \sup_{y \in \Omega}    \big|  \eta ( \Delta q(y) - \langle \nabla q(y), \nabla f(y) \rangle -  q(y) \Delta f(y) )   \big| -  O(\eta^2)  \;.
\end{align*}
Note that the Bayes-optimal denoiser corresponds to the choice $f(y) = 2\log q(y)$, and we complete the proof.
\end{proof}

\begin{proof}[Proof of Theorem~\ref{thm:distributional-matching-second-order}]
	The proof continues the footprint of Theorem~\ref{thm:distributional-matching-first-order}. By definition, we have
	\begin{align*}
	d_{\mathrm{MA}}( \mathbf{T}_2^{f,g} \sharp P_Y, P_X ) &= \sup_{y \in \Omega} \left| q(y) - p(y + \eta \nabla f(y) + \tfrac{\eta^2}{2} \nabla g(y)) \det( I_d + \eta \nabla^2 f(y) + \tfrac{\eta^2}{2} \nabla^2 g(y) ) \right| \;.
	\end{align*}

	We deal with each term separately. First, by Lemma~\ref{lem:convolution-expansion}(ii), we have
	\begin{align*}
	q(y) = p(y) + \eta \Delta p(y) + \frac{\eta^2}{2} \Delta^2 p(y) + O(\eta^3) \;,
	\end{align*}
	where $O(\eta^3)$ is uniform in $y$ under Assumption~\ref{assump:density-smoothness} for up to $k=6$.

	By Taylor expansion of $p(y + \eta \nabla f(y) + \tfrac{\eta^2}{2} \nabla g(y))$ up to third-order, we know
	\begin{align*}
	p(y + \eta \nabla f(y) + \tfrac{\eta^2}{2} \nabla g(y)) &= p(y) + \eta \langle \nabla p(y), \nabla f(y) \rangle \\
		&\quad + \frac{\eta^2}{2} \left[ \langle \nabla p(y), \nabla g(y) \rangle + \langle \nabla^2 p(y), \nabla f(y) \otimes \nabla f(y) \rangle \right] + O(\eta^3) \;.
	\end{align*}
	Here the $O(\eta^3)$ term is uniform in $y$ under Assumption~\ref{assump:density-smoothness} for up to $k=3$, and Assumption~\ref{assump:bdd-hessian}(i) \& (ii).

	For the determinant term, by Lemma~\ref{lem:determinant-expansion}, we have
	\begin{align*}
	\det( I_d + \eta \nabla^2 f(y) + \tfrac{\eta^2}{2} \nabla^2 g(y) ) &= 1 + \eta \Delta f(y) + \frac{\eta^2}{2} \left[ \Delta g(y) + (\Delta f(y))^2 - \langle \nabla^2 f(y), \nabla^2 f(y) \rangle \right] + O(\eta^3) \;.
	\end{align*}
	Again, the $O(\eta^3)$ term is uniform in $y$ ensured by Assumption~\ref{assump:bdd-hessian}(i) \& (ii).

	Plug in the expression for all three terms, and the uniform boundedness of $p, \nabla^2 f, \nabla^2 g$, we have
	\begin{align*}
	&d_{\mathrm{MA}}( \mathbf{T}_2^{f,g} \sharp P_Y, P_X ) \leq \sup_{y \in \Omega}  \left| \eta \cdot \mathcal{E}_1[p, f]   + \frac{\eta^2}{2} \cdot \mathcal{E}_2[p, f, g] \right| + O(\eta^3) \;,
	\end{align*}
	with an $O(\eta^3)$ uniform in $y$.
	Here
	\begin{align}
		\mathcal{E}_1[p, f] &:= \Delta p - \langle \nabla p, \nabla f \rangle -  p \Delta f  \;,\\
		\mathcal{E}_2[p, f, g] &:= \Delta^2 p - \big(\langle \nabla p, \nabla g \rangle + \langle \nabla^2 p, \nabla f \otimes \nabla f \rangle \big) -2 \langle \nabla p, \nabla f \rangle \Delta f - p \big(\Delta g + (\Delta f)^2 - \| \nabla^2 f \|^2 \big) \;.
	\end{align}
	Recall that in the proof of Theorem~\ref{thm:distributional-matching-first-order}, we have shown that $\mathcal{E}_1[q, f] = 0$ uniformly in $y$ when $f = f_*$. Note also by Lemma~\ref{lem:convolution-expansion}(ii), we have
	\begin{align*}
	\Delta q = \Delta p + \eta \Delta^2 p + O(\eta^2) \;, \quad \nabla q = \nabla p + \eta \nabla \Delta p + O(\eta^2)  \;, \quad q = p + \eta \Delta p + O(\eta^2) \;.
	\end{align*}
	and therefore
	\begin{align*}
		\mathcal{E}_1[p, f] = \mathcal{E}_1[q, f] - \eta \cdot \Big[ \Delta^2 p - \langle \nabla \Delta p, \nabla f \rangle - \Delta p \Delta f \Big] + O(\eta^2) \;.
	\end{align*}
	Insert it back to the upper bound of $d_{\mathrm{MA}}( \mathbf{T}_2^{f,g} \sharp P_Y, P_X )$, and note $\mathcal{E}_1[q, f] = 0$
	\begin{align*}
		\eta \cdot \mathcal{E}_1[p, f]   + \frac{\eta^2}{2} \cdot \mathcal{E}_2[p, f, g] = \frac{\eta^2}{2} \cdot \Big[ \mathcal{E}_2[p, f, g] - 2 \big( \Delta^2 p - \langle \nabla \Delta p, \nabla f \rangle - \Delta p \Delta f \big) \Big] + O(\eta^3) \;.
	\end{align*}
	Define
	\begin{align}
		\mathcal{E}_2^{\mathrm{eff}}[p, f, g] &:= \mathcal{E}_2[p, f, g] - 2 \big( \Delta^2 p - \langle \nabla \Delta p, \nabla f \rangle - \Delta p \Delta f \big) \;. 
	\end{align}
	We will first verify that $\mathcal{E}_2^{\mathrm{eff}}[q, f, g] = 0$ when $f = f_*$ and $g = g_*$, and then use Lemma~\ref{lem:convolution-expansion}(ii) to conclude that $\mathcal{E}_2^{\mathrm{eff}}[p, f, g] -  \mathcal{E}_2^{\mathrm{eff}}[q, f, g] = O(\eta)$ uniformly in $y$ to complete the proof
	\begin{align}
	d_{\mathrm{MA}}( \mathbf{T}_2^{f,g} \sharp P_Y, P_X ) &\leq \sup_{y \in \Omega}  \frac{\eta^2}{2} \cdot \left| \mathcal{E}_2^{\mathrm{eff}}[p, f, g] - \mathcal{E}_2^{\mathrm{eff}}[q, f, g] \right| + O(\eta^3) = O(\eta^3) = O(\sigma^6) \;.
	\end{align}

	All we left is to establish $\mathcal{E}_2^{\mathrm{eff}}[q, f, g] = 0$. Plug in the expression of $\mathcal{E}_2[q, f, g]$, we have
	\begin{align*}
		\mathcal{E}_2^{\mathrm{eff}}[q, f, g] &=  - \big[\langle \nabla q, \nabla g \rangle + q \Delta g \big] - \big[ \langle \nabla^2 q, \nabla f \otimes \nabla f \rangle  +  2 \langle \nabla q, \nabla f \rangle \Delta f   + q[ (\Delta f)^2 - \langle \nabla^2 f, \nabla^2 f \rangle ]\big] \\
		& \quad \quad - \big[ \Delta^2 q - 2\langle \nabla \Delta q, \nabla f \rangle - 2 \Delta q \Delta f    \big] \;.\\
		& = - \nabla \cdot ( q \nabla g ) - \nabla \cdot ( \Delta q \nabla f - \nabla^2 f \nabla q ) + \nabla \cdot (2  \Delta q \nabla f - \nabla \Delta q ) \;,
	\end{align*}
	where the last line uses Lemma~\ref{lem:third-order-identity} for the middle bracketed term.
	Recall that $f = f_*$ and $g = g_*$ satisfy
	\begin{align*}
	 q \nabla g  &= \Delta q \nabla f + \nabla^2 f \nabla q - \nabla \Delta q \;,
	\end{align*}
	using Lemma~\ref{lem:log-density-identities}.
	Therefore, we conclude that $\mathcal{E}_2^{\mathrm{eff}}[q, f, g] = 0$ when $f = f_*$ and $g = g_*$.
\end{proof}

\section{Supporting Lemmas}
\label{sec:proofs-of-supporting-lemmas}
\begin{lemma}
	\label{lem:convolution-expansion}
	Let $X \in \mathbb{R}^d$ be a random variable with density $p: \mathbb{R}^d \rightarrow \mathbb{R}_{\geq 0}$, and $Z \in \mathbb{R}^d$ be a random variable with density $\phi: \mathbb{R}^d \rightarrow \mathbb{R}_{\geq 0}$. Consider the noisy measurement $Y = X + \sqrt{2\eta} Z$ with density $q: \mathbb{R}^d \rightarrow \mathbb{R}_{\geq 0}$.

	(i) Let Assumption~\ref{assump:noise-moments}(i) hold, and Assumption~\ref{assump:density-smoothness} holds for up to $k=4$. Then the density $q$ admits the following expansion
	\begin{align*}
	q(y) = p(y) + \eta \Delta p(y) +  O(\eta^{2}) \;.
	\end{align*}
	Moreover, the derivative of $q(y)$ admits the expansion
	\begin{align*}
	\partial_i q(y) &= \partial_i p(y) + O(\eta) \;,\\
	\Delta q(y) &= \Delta p(y) + O(\eta) \;.
	\end{align*}

	(ii) Let Assumptions~\ref{assump:noise-moments}(i) \& (ii) hold, and let Assumption~\ref{assump:density-smoothness} hold for up to $k=6$, then
	\begin{align*}
	q(y) = p(y) + \eta \Delta p(y) + \frac{\eta^2}{2} \Delta^2 p(y) + O( \eta^{3}) \;.
	\end{align*}
	Moreover, the derivatives of $q(y)$ admit the expansion
	\begin{align*}
	\partial_i q(y) &= \partial_i p(y) + \eta \partial_i \Delta p(y) + O(\eta^{2}) \;, \\
	\Delta q(y) &= \Delta p(y) + \eta \Delta^2 p(y) + O(\eta^{2}) \;, \\
	\partial_i \Delta q(y) &= \partial_i \Delta p(y) + O(\eta) \;.
	\end{align*}

	Here, the $O(\cdot)$ terms are uniform over $y$.
\end{lemma}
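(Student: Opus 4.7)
The plan is to exploit the convolution structure of the denoising model directly. Since $X$ and $Z$ are independent with $Y = X + \sqrt{2\eta} Z$, we have $q(y) = \E_Z[\, p(y - \sqrt{2\eta}\, Z)\,]$, so the expansions reduce entirely to Taylor expanding $p$ around $y$ and using the prescribed moments of $Z$. The driving observation is that symmetry of $Z$ about the origin kills all odd joint moments of $Z$, so only even-order Taylor terms survive after taking expectation; the remaining terms collapse via the prescribed second- and fourth-moment identities.

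For part (i), I would Taylor-expand $p(y - \sqrt{2\eta} z)$ to fourth order in $z$ with a mean-value form remainder $R_4(y,z,\eta)$ satisfying $|R_4(y,z,\eta)| \le C \eta^2 \|z\|^4 \|\nabla^4 p\|_\infty$ pointwise. Taking $\E_Z$, the first- and third-order contributions vanish by symmetry of $Z$, while the second-order term reduces via $\E[Z_iZ_j] = \delta_{ij}$ (Assumption~\ref{assump:noise-moments}(i)) to $\eta \Delta p(y)$. The remainder is bounded by $C \eta^2 \E[\|Z\|^4] \|\nabla^4 p\|_\infty$, which is finite and independent of $y$ under Assumption~\ref{assump:density-smoothness} with $k=4$. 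The expansions for $\partial_i q$ and $\Delta q$ follow by differentiating under the integral sign (justified by the uniform boundedness of the relevant derivatives of $p$ against the probability measure $\phi\,dz$) and running the same Taylor argument applied to $\partial_i p$ and $\Delta p$ in place of $p$, but only to second order, since we merely need an $O(\eta)$ remainder.

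For part (ii), I would push the Taylor expansion of $p(y - \sqrt{2\eta}z)$ to sixth order. All odd-order terms again die by symmetry after taking $\E_Z$. The second-order term reproduces $\eta \Delta p(y)$. The fourth-order term is
\begin{align*}
\frac{(2\eta)^2}{4!} \sum_{i,j,k,l} \partial_i \partial_j \partial_k \partial_l p(y)\, \E[Z_i Z_j Z_k Z_l],
\end{align*}
and substituting the Isserlis/Wick identity $\E[Z_iZ_jZ_kZ_l] = \delta_{ij}\delta_{kl} + \delta_{ik}\delta_{jl} + \delta_{il}\delta_{jk}$ from Assumption~\ref{assump:noise-moments}(ii) contracts the tensor sum to $3\Delta^2 p(y)$, which combines with the prefactor $4\eta^2/24$ to give exactly $\tfrac{\eta^2}{2}\Delta^2 p(y)$. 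The sixth-order remainder is dominated by $C\eta^3 \E[\|Z\|^6]\|\nabla^6 p\|_\infty$, uniform in $y$ under Assumption~\ref{assump:density-smoothness} with $k=6$ and Assumption~\ref{assump:noise-moments}(ii). The derivative expansions $\partial_i q$, $\Delta q$, and $\partial_i \Delta q$ follow by applying exactly the same routine to $\partial_i p$, $\Delta p$, and $\partial_i \Delta p$ respectively, with Taylor order chosen according to the target precision; the assumption $k=6$ provides just enough regularity for all three.

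The only delicate step is the tensor contraction in part (ii): one has to notice that the three Wick terms $\delta_{ij}\delta_{kl} + \delta_{ik}\delta_{jl} + \delta_{il}\delta_{jk}$ each collapse $\sum_{ijkl}\partial_i\partial_j\partial_k\partial_l p$ into $\Delta^2 p$ after reindexing, using only the commutativity of partial derivatives (which is legitimate since $\nabla^4 p$ is continuous under Assumption~\ref{assump:density-smoothness}). Everything else is routine bookkeeping, namely verifying that the remainders can be bounded by a fixed constant times a power of $\eta$ uniformly in $y$, which is immediate because $\|\nabla^k p\|_\infty$ and $\E[\|Z\|^k]$ are both finite constants under the stated hypotheses.
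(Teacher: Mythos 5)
Your proposal is correct and follows essentially the same route as the paper: write $q(y) = \E_Z[p(y-\sqrt{2\eta}Z)]$, Taylor expand $p$ (or its derivatives) to the appropriate order with a mean-value remainder, use symmetry of $Z$ to kill odd-order contributions, and contract the surviving even-order terms against $\E[Z^{\otimes 2}] = I_d$ and the fourth-moment Wick identity to obtain $\eta\Delta p$ and $\tfrac{\eta^2}{2}\Delta^2 p$, with remainders bounded uniformly in $y$ by $\|\nabla^k p\|_\infty$ and the finite moments of $Z$. The Wick contraction yielding the factor $3$ and the cancellation $\tfrac{(2\eta)^2}{4!}\cdot 3 = \tfrac{\eta^2}{2}$ is exactly as in the paper.
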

\begin{proof}[Proof of Lemma~\ref{lem:convolution-expansion}]
	By the definition of convolution, we have
	\begin{align}
		\label{eqn:convolution-density}
	q(y) &= \int_{\mathbb{R}^d} p(y - \sqrt{2\eta} z) \phi(z) \dd{z} \;.
	\end{align}
	For a vector $z \in \mathbb{R}^d$, we denote $z^{\otimes k} := z \otimes z \otimes \cdots \otimes z$ ($k$ times) as the $k$-th order tensor power of $z$.
	Using Taylor expansion of $p(y - \sqrt{2\eta} z)$ around $y$, we know there exists some $\delta = \delta(y, \eta, z, p) \in [0,1]$ such that
	\begin{align*}
	p(y - \sqrt{2\eta} z) &= p(y) - \sqrt{2\eta} \langle \nabla p(y), z \rangle + \frac{(-\sqrt{2\eta})^2}{2!} \langle \nabla^2 p(y), z^{\otimes 2} \rangle + \frac{(-\sqrt{2\eta})^3}{3!} \langle \nabla^3 p(y), z^{\otimes 3} \rangle \\
	& \quad + \frac{(-\sqrt{2\eta})^4}{4!} \langle \nabla^4 p(y+\delta(-\sqrt{2\eta} z)), z^{\otimes 4}\rangle \;.
	\end{align*}
	Plugging the above expansion into the convolution integral,
	\begin{align*}
	&\left| q(y) - \left[ p(y) + \frac{(\sqrt{2\eta})^2}{2!} \langle \nabla^2 p(y), \E[Z^{\otimes 2}] \rangle \right] \right| \\
	&\quad \leq  \int_{\mathbb{R}^d} \frac{(\sqrt{2\eta})^4}{4!} \langle |\nabla^4 p(y{+}\delta(-\sqrt{2\eta} z)) |, |z^{\otimes 4} | \rangle  \phi(z) \dd{z} = C \cdot \eta^{2} \;,
	\end{align*}
	where the last equality follows from Assumption~\ref{assump:density-smoothness} for $k=4$, with a constant depending only on $\| \nabla^4 p \|_{\infty}$ and the fourth moment of $Z$. The $O(\eta^{2})$ is uniform over $y$ and $\eta \in [0, 1)$. Recall $\langle \nabla^2 p(y), \E[Z^{\otimes 2}] \rangle = \Delta p(y)$ by Assumption~\ref{assump:noise-moments}(i), the first part of the lemma is proved.

	For derivatives $\partial_i q$, we can take derivative $\partial_i$ inside \eqref{eqn:convolution-density} (ensured by dominated convergence), the convolution integral defining $q(y)$, and repeat the above Taylor expansion argument for $\partial_i p(y-\sqrt{2\eta} z)$ up to the second-order terms, with the remainder term bounded by $C \cdot \eta$ for some constant $C$ depending only on $\| \nabla^3 p \|_{\infty}$ and the third absolute moment of $Z$. The same logic applies to the convolution integral defining $\Delta q(y)$, with expansion applied to $\Delta p(y - \sqrt{2\eta} z)$. Note that the remainder term is bounded by $C \cdot \eta$ for some constant $C$ depending only on $\| \nabla^2 \Delta p \|_{\infty}$ and the fourth moment of $Z$.

	For the second part of the lemma, we further expand the Taylor expansion to the $5$-th and $6$-th order terms, namely, there exists some $\delta = \delta(y, \eta, z, p) \in [0,1]$ such that
	\begin{align*}
	&\left| q(y) - \left[ p(y) + \eta \Delta p(y) + \frac{(\sqrt{2\eta})^4}{4!} \langle \nabla^4 p(y), \E[Z^{\otimes 4}] \rangle \right] \right| \\
	&\quad \leq  \int_{\mathbb{R}^d} \frac{(\sqrt{2\eta})^6}{6!} \langle |\nabla^6 p(y{+}\delta(-\sqrt{2\eta} z)) |, |z^{\otimes 6} | \rangle  \phi(z) \dd{z} \;,
	\end{align*}
	where the last term is bounded by $C \cdot \eta^{3}$ for some constant $C$ depending only on $\| \nabla^6 p \|_{\infty}$ and the sixth moment of $Z$. By Assumption~\ref{assump:noise-moments}(ii), we can verify that
	$$\langle \nabla^4 p(y), \E[Z^{\otimes 4}] \rangle = 3 \cdot \Delta^2 p(y)\;.$$ 

	For the derivatives $\partial_i q$. We can take the derivative $\partial_i$ inside \eqref{eqn:convolution-density}, and repeat the above Taylor expansion argument up to the fourth-order terms, with the remainder term bounded by $C \cdot \eta^{2}$ for some constant $C$ depending only on $\| \nabla^5 p \|_{\infty}$ and the fifth absolute moment of $Z$. $\Delta q$ term can be handled similarly by taking derivative $\Delta$ inside the convolution integral, and expand $\Delta q(y-\sqrt{2\eta}z)$, with remainder term depending on $\|\nabla^4 \Delta p \|_\infty$, and thus bounded by $\|\nabla^6 p \|_\infty$. The expansion of $\partial_i \Delta q(y)$ can be proved similarly by taking derivative $\partial_i \Delta$ inside \eqref{eqn:convolution-density} and repeating the Taylor expansion argument up to the second-order terms, with the remainder term bounded by $C \cdot \eta$ for some constant $C$ depending only on $\| \nabla^5 p \|_{\infty}$ and the third absolute moment of $Z$.
\end{proof}

\begin{lemma}
	\label{lem:log-density-identities}
	The following identity holds for $f = \log q$.
	\begin{enumerate}
	\item[(i)] $q \nabla f = \nabla q$
	\item[(ii)] $q (\nabla^2 f + \nabla f \otimes \nabla f) = \nabla^2 q$
	\item[(iii)] $q (\Delta f + \| \nabla f \|^2) = \Delta q $
	\item[(iv)] $q ( \Delta f \nabla f + \|\nabla f\|^2 \nabla f + \nabla \Delta f + 2 \nabla^2 f \nabla f) = \nabla \Delta q$
	\end{enumerate}
\end{lemma}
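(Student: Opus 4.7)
The plan is to establish the four identities in sequence, each obtained by differentiating the previous one and substituting the already-proven lower-order identity to eliminate derivatives of $q$ in favor of derivatives of $f$.

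First, identity (i) is immediate from the chain rule: since $f = \log q$, we have $\nabla f = \nabla q / q$, so $q \nabla f = \nabla q$. For (ii), I will differentiate (i) component-wise. Writing $\partial_i q = q \, \partial_i f$ and applying $\partial_j$ to both sides yields $\partial_j \partial_i q = \partial_j q \cdot \partial_i f + q \cdot \partial_j \partial_i f$; substituting $\partial_j q = q \, \partial_j f$ from (i) and reassembling into tensor form gives (ii). Identity (iii) then follows by taking the trace of (ii), using $\mathrm{tr}(\nabla^2 f) = \Delta f$ and $\mathrm{tr}(\nabla f \otimes \nabla f) = \|\nabla f\|^2$.

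For (iv), I take the gradient of (iii). The product rule gives
\[
\nabla q \cdot (\Delta f + \|\nabla f\|^2) + q \cdot \nabla(\Delta f + \|\nabla f\|^2) = \nabla \Delta q.
\]
The one small computation to be careful about is $\nabla \|\nabla f\|^2 = 2 \, \nabla^2 f \, \nabla f$, which follows from $\partial_j \sum_i (\partial_i f)^2 = 2 \sum_i \partial_i f \, \partial_j \partial_i f = 2 \, (\nabla^2 f \, \nabla f)_j$. Substituting this together with $\nabla q = q \nabla f$ from (i) and collecting the four resulting terms produces (iv) exactly in the stated form.

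There is no real obstacle here: the lemma is a direct chain-rule exercise. The only place where care is needed is in the tensor bookkeeping in the last step, ensuring that $\nabla^2 f \, \nabla f$ is interpreted as the matrix-vector product and that the scalar factors $\Delta f$ and $\|\nabla f\|^2$ multiplying $\nabla f$ are kept distinct from $\nabla \Delta f$ and $\nabla^2 f \, \nabla f$ when reassembling.
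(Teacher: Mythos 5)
Your proof is correct and follows the same approach as the paper: (i) from the chain rule, (ii) by differentiating (i) and substituting back, (iii) as the trace of (ii), and (iv) by taking the gradient of (iii) and using (i) together with $\nabla\|\nabla f\|^2 = 2\nabla^2 f\,\nabla f$. Nothing to add.
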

\begin{proof}[Proof of Lemma~\ref{lem:log-density-identities}]
(i) follows directly from the definition of $f = \log q$.
(ii) follows by taking derivative $\nabla$ on both sides of (i), and replace $\nabla q$ by $q \nabla f$ as in (i). (iii) is the trace of (ii). (iv) follows by taking derivative $\nabla$ on both sides of (iii), and replace $\nabla q$ by the expressions in (i).
\end{proof}

\begin{lemma}
	\label{lem:third-order-identity}
	Let $f = \log q$, we have
	\begin{align*}
		\nabla \cdot \left( \Delta q \nabla f - \nabla^2 f \nabla q \right) = \langle \nabla^2 q, \nabla f \otimes \nabla f \rangle  +  2 \langle \nabla q, \nabla f \rangle \Delta f   + q[ (\Delta f)^2 - \langle \nabla^2 f, \nabla^2 f \rangle ]	 \;.
	\end{align*}
\end{lemma}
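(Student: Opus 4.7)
The plan is a direct computation: expand the divergence on the left-hand side via the product rule, then substitute the four $\log q$ identities from Lemma~\ref{lem:log-density-identities} to rewrite every occurrence of $\nabla q, \nabla^2 q, \Delta q, \nabla \Delta q$ in terms of $f$ and $q$, and finally match termwise against the right-hand side (after doing the same substitutions there).

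Concretely, I would first compute
\begin{align*}
\nabla \cdot (\Delta q \,\nabla f) &= \langle \nabla \Delta q, \nabla f \rangle + \Delta q \,\Delta f, \\
\nabla \cdot (\nabla^2 f \,\nabla q) &= \langle \nabla \Delta f, \nabla q \rangle + \langle \nabla^2 f, \nabla^2 q \rangle,
\end{align*}
where the second line uses $\partial_i (\partial_{ij} f \,\partial_j q) = \partial_{iij}f\,\partial_j q + \partial_{ij}f\,\partial_{ij}q$ and summation over $i,j$. Subtracting gives the LHS as the sum of four scalar quantities.

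Next I would plug in Lemma~\ref{lem:log-density-identities}: write $\nabla q = q\nabla f$, $\nabla^2 q = q(\nabla^2 f + \nabla f \otimes \nabla f)$, $\Delta q = q(\Delta f + \|\nabla f\|^2)$, and $\nabla \Delta q = q(\Delta f\,\nabla f + \|\nabla f\|^2 \nabla f + \nabla \Delta f + 2 \nabla^2 f\,\nabla f)$. After this substitution, the $\langle \nabla \Delta f, \nabla f\rangle$ terms coming from $\langle \nabla \Delta q,\nabla f\rangle$ and from $\langle \nabla \Delta f, \nabla q\rangle$ cancel exactly, and the remaining terms collect into
\begin{align*}
q \bigl[ (\Delta f)^2 + 2\|\nabla f\|^2 \Delta f + \|\nabla f\|^4 + \langle \nabla^2 f\,\nabla f, \nabla f\rangle - \|\nabla^2 f\|^2 \bigr].
\end{align*}

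The parallel reduction of the RHS proceeds by substituting $\nabla q = q \nabla f$ and $\nabla^2 q = q(\nabla^2 f + \nabla f \otimes \nabla f)$, using the identity $\langle \nabla f \otimes \nabla f, \nabla f \otimes \nabla f\rangle = \|\nabla f\|^4$ and $\langle \nabla^2 f, \nabla f \otimes \nabla f\rangle = \langle \nabla^2 f\,\nabla f, \nabla f\rangle$, producing the same expression. The only real obstacle is bookkeeping: ensuring that the Frobenius pairings $\langle \nabla^2 f, \nabla^2 f\rangle$, the rank-one tensor contraction $\langle \nabla^2 f, \nabla f \otimes \nabla f\rangle$, and the quadratic form $\langle \nabla^2 f \nabla f, \nabla f\rangle$ are tracked consistently, and that the $2\langle \nabla^2 f\,\nabla f, \nabla f\rangle$ contribution from $\nabla \Delta q$ combines correctly with the $-\langle \nabla^2 f \nabla f, \nabla f\rangle$ from $\langle \nabla^2 f, \nabla^2 q\rangle$ to leave a single such term, matching the RHS exactly.
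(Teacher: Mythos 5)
Your proposal is correct and follows essentially the same route as the paper: expand the divergence via the product rule into $\langle \nabla \Delta q, \nabla f\rangle + \Delta q\,\Delta f - \langle \nabla\Delta f, \nabla q\rangle - \langle \nabla^2 f, \nabla^2 q\rangle$, substitute the four $\log q$ identities, observe the cancellation of the $\langle\nabla\Delta f,\nabla f\rangle$ terms, and collect. The only cosmetic difference is that you reduce both sides to a common normal form in $f,q$ and match, whereas the paper reduces only the left side and then regroups it back into the claimed form using the identities in reverse; the computation and the key cancellations are identical.
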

\begin{proof}[Proof of Lemma~\ref{lem:third-order-identity}]
By the chain rule, we have
\begin{align*}
&  \nabla \cdot \left( \Delta q \nabla f - \nabla^2 f \nabla q \right) = \langle \nabla \Delta q, \nabla f\rangle + \Delta q \Delta f - \langle \nabla \Delta f, \nabla q \rangle - \langle \nabla^2 f, \nabla^2 q \rangle 
\end{align*}
We deal with each term separately on the right-hand side.

First, 
\begin{align*}
 \langle \nabla \Delta q, \nabla f\rangle = q (\Delta f + \|\nabla f\|^2) \| \nabla f \|^2 + q \langle \nabla \Delta f,  \nabla f \rangle + 2 q \langle \nabla^2 f,  \nabla f \otimes \nabla f \rangle ~~\text{use Lemma~\ref{lem:log-density-identities}(iv)} \; .
\end{align*}
Second,
\begin{align*}
 \Delta q \Delta f = q (\Delta f + \|\nabla f\|^2) \Delta f ~~\text{use Lemma~\ref{lem:log-density-identities}(iii)} \;.
\end{align*}
Third
\begin{align*}
	- \langle \nabla \Delta f, \nabla q \rangle &= - q \langle \nabla \Delta f, \nabla f \rangle ~~\text{use Lemma~\ref{lem:log-density-identities}(i)} \;,\\
	-\langle \nabla^2 f, \nabla^2 q \rangle &= - q \langle \nabla^2 f, \nabla^2 f \rangle - q \langle \nabla^2 f, \nabla f \otimes \nabla f \rangle ~~\text{use Lemma~\ref{lem:log-density-identities}(ii)} \;.
\end{align*}

Combining the above four displays, we know
\begin{align*}
& \langle \nabla \Delta q, \nabla f\rangle + \Delta q \Delta f - \langle \nabla \Delta f, \nabla q \rangle - \langle \nabla^2 f, \nabla^2 q \rangle \\
&= q \langle \nabla^2 f, \nabla f \otimes \nabla f \rangle  - q \langle  \nabla^2 f, \nabla^2 f \rangle + q(\Delta f + \|\nabla f\|^2)^2 \\
&= q \langle \nabla^2 f +  \nabla f \otimes \nabla f, \nabla f \otimes \nabla f \rangle + 2 q \langle \nabla f, \nabla f \rangle \Delta f + q (\Delta f)^2 - q \langle \nabla^2 f, \nabla^2 f \rangle \\
&\text{recall Lemma~\ref{lem:log-density-identities}(ii), and (i), we have} \\
& = \langle \nabla^2 q, \nabla f \otimes \nabla f \rangle + 2 \langle \nabla q, \nabla f \rangle \Delta f + q \left[ (\Delta f)^2 -  \langle \nabla^2 f, \nabla^2 f \rangle \right] \;.
\end{align*}

\end{proof}

Finally, we need a standard expansion of the determinant function, which can be verified directly.
\begin{lemma}
	\label{lem:determinant-expansion}
	For a symmetric matrix $A = A^\top$, we have
	$$\det( I_d + \eta A ) = 1 + \eta \tr(A) + \frac{\eta^2}{2} \left( \tr(A)^2 - \tr(A^2) \right) + O(\eta^3) \;.$$
\end{lemma}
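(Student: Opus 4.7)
The plan is to diagonalize and expand the product. Because $A$ is symmetric, the spectral theorem yields an orthogonal matrix $Q$ such that $A = Q \Lambda Q^\top$ with $\Lambda = \mathrm{diag}(\lambda_1, \ldots, \lambda_d)$. Since $\det$ is invariant under conjugation by orthogonal matrices, we obtain
\begin{align*}
\det(I_d + \eta A) = \det(Q(I_d + \eta \Lambda)Q^\top) = \prod_{i=1}^d (1 + \eta \lambda_i) \;.
\end{align*}

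Next, I would expand this product in powers of $\eta$. Multiplying out, the coefficient of $\eta^0$ is $1$, the coefficient of $\eta^1$ is $\sum_i \lambda_i$, and the coefficient of $\eta^2$ is the second elementary symmetric polynomial $\sum_{i<j} \lambda_i \lambda_j$. All higher-order terms carry at least a factor $\eta^3$, and grouping them into a remainder yields
\begin{align*}
\prod_{i=1}^d (1 + \eta \lambda_i) = 1 + \eta \sum_{i} \lambda_i + \eta^2 \sum_{i<j} \lambda_i \lambda_j + R(\eta) \;,
\end{align*}
where $|R(\eta)| \leq C \eta^3$ for $\eta$ in a bounded range, with $C$ depending polynomially on $\max_i |\lambda_i|$ (equivalently on $\|A\|$). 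This bound is the only quantitative statement needed, and it follows from the standard estimate $|\prod_i (1 + a_i) - 1 - e_1 - e_2| \leq C \sum_{k \geq 3} |e_k|$ for elementary symmetric polynomials $e_k$ in the $a_i$'s, with $|e_k| \leq \binom{d}{k} (\eta \|A\|)^k$.

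Finally, I would convert the elementary symmetric polynomials into traces. Immediately $\sum_i \lambda_i = \mathrm{tr}(\Lambda) = \mathrm{tr}(A)$, and using Newton's identity (or the direct algebraic identity)
\begin{align*}
\sum_{i<j} \lambda_i \lambda_j = \tfrac{1}{2}\Big( \big(\textstyle\sum_i \lambda_i\big)^2 - \sum_i \lambda_i^2 \Big) = \tfrac{1}{2}\bigl( \mathrm{tr}(A)^2 - \mathrm{tr}(A^2) \bigr) \;,
\end{align*}
one obtains exactly the claimed expansion. There is no real obstacle here; the only subtlety is ensuring the $O(\eta^3)$ remainder is interpreted uniformly with a constant controlled by $\|A\|$, which in the applications of this lemma in the main proofs is guaranteed by Assumption~\ref{assump:bdd-hessian}.
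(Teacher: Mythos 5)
Your proof is correct and follows essentially the same route as the paper: diagonalize $A$ via the spectral theorem, expand $\prod_i(1+\eta\lambda_i)$ in powers of $\eta$, and identify $\sum_i\lambda_i = \tr(A)$ and $\sum_i\lambda_i^2 = \tr(A^2)$. Your additional remarks on the uniformity of the $O(\eta^3)$ remainder (controlled by $\|A\|$, as guaranteed in applications by Assumption~\ref{assump:bdd-hessian}) are a fine, if slightly more explicit, version of the paper's ``all eigenvalues bounded'' statement.
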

\begin{proof}[Proof of Lemma~\ref{lem:determinant-expansion}]
	Let $\lambda_1, \lambda_2, \ldots, \lambda_d$ be the eigenvalues of $A$, all bounded. Then
	\begin{align*}
	\det( I_d + \eta A ) = \prod_{i=1}^d (1 + \eta \lambda_i) = 1 + \eta \sum_{i=1}^d \lambda_i + \frac{\eta^2}{2} \left[ \left( \sum_{i=1}^d \lambda_i \right)^2 - \sum_{i=1}^d \lambda_i^2 \right] + O(\eta^3) \;.
	\end{align*}
	Noting that $\tr(A) = \sum_{i=1}^d \lambda_i$ and $\tr(A^2) = \sum_{i=1}^d \lambda_i^2$, we complete the proof.
\end{proof}

\end{document}